  \providecommand\BibTeX{{%
    \normalfont B\kern-0.5em{\scshape i\kern-0.25em b}\kern-0.8em\TeX}}}
\gdef\@copyrightpermission{
  \begin{minipage}{0.3\columnwidth}
   \href{https://creativecommons.org/licenses/by/4.0/}{\includegraphics[width=0.90\textwidth]{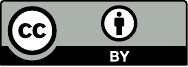}}
  \end{minipage}\hfill
  \begin{minipage}{0.7\columnwidth}
   \href{https://creativecommons.org/licenses/by/4.0/}{This work is licensed under a Creative Commons Attribution International 4.0 License.}
  \end{minipage}
  \vspace{5pt}
}
\newcommand{\todo}[1]{}
\renewcommand{\todo}[1]{{\color{orange} TODO: {#1}}}
\newcommand{\todom}[1]{}
\renewcommand{\todom}[1]{{\color{orange} \textbf{Mathias}: {#1}}}
\newcommand{\question}[1]{}
\renewcommand{\question}[1]{{\color{red} {#1}}}
\newcommand{\boxsqel}{\textsf{Box$^\textsf{2}$EL}}
\newcommand{\emelpp}{EmEL$^{++}$}
\newcommand{\el}{$\mathcal{EL^{++}}$\xspace}
\newcommand{\alc}{$\mathcal{ALC}$}
\newcommand{\ont}{\mathcal{O}}
\newcommand{\bc}{\mathsf{BC}_\mathcal{T}}
\newcommand{\boxx}{\textnormal{Box}}
\newcommand{\bump}{\textnormal{Bump}}
\newcommand{\head}{\textnormal{Head}}
\newcommand{\tail}{\textnormal{Tail}}
\newcommand{\tsf}{\textsf}
\newcommand{\asf}{\mathsf}
\newcommand{\loss}{\mathcal L}
\newcommand{\test}{\mathcal D_\text{test}}
\DeclareMathOperator{\rk}{rk}
\DeclarePairedDelimiter\abs{\lvert}{\rvert}%
\DeclarePairedDelimiter\norm{\lVert}{\rVert}%
\let\oldabs\abs
\def\abs{\@ifstar{\oldabs}{\oldabs*}}
\let\oldnorm\norm
\def\norm{\@ifstar{\oldnorm}{\oldnorm*}}
\newtheorem{theorem}{Theorem}
\newtheorem{lemma}{Lemma}
\theoremstyle{definition}
\newtheorem{example}{Example}
\newenvironment{proofsketch}{%
    \proof}{\endproof}
\begin{document}

\title{Dual Box Embeddings for the Description Logic \texorpdfstring{\el}{EL++}}

\author{Mathias Jackermeier}
\affiliation{%
  \institution{University of Oxford}
  \city{Oxford}
  \country{UK}
}
\email{mathias.jackermeier@cs.ox.ac.uk}

\author{Jiaoyan Chen}
\affiliation{%
  \institution{The University of Manchester}
  \city{Manchester}
  \country{UK}
}
\email{jiaoyan.chen@manchester.ac.uk}

\author{Ian Horrocks}
\affiliation{%
  \institution{University of Oxford}
  \city{Oxford}
  \country{UK}
}
\email{ian.horrocks@cs.ox.ac.uk}


\begin{abstract}
OWL ontologies, whose formal semantics are rooted in Description Logic (DL), have been widely used for knowledge representation. Similar to Knowledge Graphs (KGs), ontologies are often incomplete, and maintaining and constructing them has proved challenging. While classical deductive reasoning algorithms use the precise formal semantics of an ontology to predict missing facts, recent years have witnessed growing interest in \textit{inductive} reasoning techniques that can derive \textit{probable} facts from an ontology. Similar to KGs, a promising approach is to learn ontology embeddings in a latent vector space, while additionally ensuring they adhere to the semantics of the underlying DL\@. While a variety of approaches have been proposed, current ontology embedding methods suffer from several shortcomings, especially that they all fail to faithfully model one-to-many, many-to-one, and many-to-many relations and role inclusion axioms.
 To address this problem and improve ontology completion performance, we propose a novel ontology embedding method named \boxsqel\ for the DL \el, which represents both concepts and roles as boxes (i.e., axis-aligned hyperrectangles), and models inter-concept relationships using a bumping mechanism.
 We theoretically prove the soundness of \boxsqel\ and conduct an extensive experimental evaluation, achieving state-of-the-art results across a variety of datasets on the tasks of subsumption prediction, role assertion prediction, and approximating deductive reasoning.\footnote{Code and data are available at \url{https://github.com/KRR-Oxford/BoxSquaredEL}.
 }
\end{abstract}

\begin{CCSXML}
<ccs2012>
   <concept>
       <concept_id>10002951.10003260.10003309.10003315.10003316</concept_id>
       <concept_desc>Information systems~Web Ontology Language (OWL)</concept_desc>
       <concept_significance>300</concept_significance>
       </concept>
   <concept>
       <concept_id>10010147.10010178.10010187.10003797</concept_id>
       <concept_desc>Computing methodologies~Description logics</concept_desc>
       <concept_significance>500</concept_significance>
       </concept>
   <concept>
       <concept_id>10010147.10010178.10010187.10010195</concept_id>
       <concept_desc>Computing methodologies~Ontology engineering</concept_desc>
       <concept_significance>500</concept_significance>
       </concept>
   <concept>
       <concept_id>10010147.10010257.10010293.10010319</concept_id>
       <concept_desc>Computing methodologies~Learning latent representations</concept_desc>
       <concept_significance>500</concept_significance>
       </concept>
   <concept>
       <concept_id>10010147.10010257.10010293.10010297.10010299</concept_id>
       <concept_desc>Computing methodologies~Statistical relational learning</concept_desc>
       <concept_significance>300</concept_significance>
       </concept>
 </ccs2012>
\end{CCSXML}

\ccsdesc[300]{Information systems~Web Ontology Language (OWL)}
\ccsdesc[500]{Computing methodologies~Description logics}
\ccsdesc[500]{Computing methodologies~Ontology engineering}
\ccsdesc[500]{Computing methodologies~Learning latent representations}
\ccsdesc[300]{Computing methodologies~Statistical relational learning}

\keywords{Ontology Embedding, Ontology Completion, Description Logic, Web Ontology Language, Link Prediction}

\maketitle

\section{Introduction}
Ontologies are a widely used formalism to represent general knowledge about a domain \cite{staab2010handbook}. They are usually specified in the Web Ontology Language (OWL)~\citep{grau2008OWL} standard developed by W3C\footnote{\url{https://www.w3.org/OWL/}}, and have been widely adopted in many domains such as the Semantic Web~\citep{heder2014semantic}, healthcare~\citep{schulz2009SNOMED}, bioinformatics~\citep{hoehndorf2011common}, and geoinformatics~\citep{zhai2010geospatial}. OWL allows for the expression of a variety of statements, ranging from simple relational facts to specifying concept hierarchies and complex logical relationships, and is underpinned by Description Logic (DL) \citep{baader2005description} to define its formal semantics.

Although many real-world OWL ontologies have been developed and used with great success, such as the Gene Ontology GO \cite{ashburner2000gene} and the food ontology FoodOn \cite{dooley2018foodon}, both maintaining these existing OWL ontologies and creating new ontologies has proved challenging and relies mostly on manual labor carried out by experts. Common ontology curation tasks include completing missing subsumptions between concepts (or membership relations between individuals and concepts) and identifying missing logical restrictions between concepts.
Symbolic logical reasoning algorithms such as HermiT \cite{glimm2014HermiT} and ELK \cite{kazakov2014Incredible} help address this problem by deductively inferring implicit knowledge from the precise semantics of an ontology, but this classical reasoning is often too rigid for real-world OWL ontologies\,---\, especially in the presence of incomplete or noisy data\,---\, and cannot derive knowledge that is only \textit{probable} from the given ontology.

At the same time, there has been growing interest in representation learning-based methods for completing Knowledge Graphs (KGs) \cite{hogan2021knowledge}, i.e.\ relational facts in the form of RDF\footnote{Resource Description Framework. \url{https://www.w3.org/RDF/}.} triples <Subject, Predicate, Object>. Most of these approaches first learn structure-preserving \textit{embeddings} of the entities and relations (predicates) of a KG in a latent vector space and then use them to score the likelihood of novel facts \citep{wang2017knowledge}.
For example, the classic method TransE \cite{bordes2013translating} maps entities and relations to vectors such that translating the subject embedding by the relation embedding approximately yields the object embedding.

Similar embedding-based techniques as for KGs have been developed for inductive reasoning in ontologies, which promises to complement classical deductive reasoning for ontology curation tasks.
Some approaches such as OPA2Vec \cite{smaili2019opa2vec} and OWL2Vec* \cite{chen2021owl2vec} rely on exploiting textual meta information (e.g., concept labels and comments) to model similarities between entities, but do not retain the semantics defined by the underlying DL\@. Other approaches aim to directly embed the logical information of an OWL ontology in the latent space \cite{kulmanov2019embeddings,mondala2021emel++,peng2022description,xiong2022box}, mostly targeting the OWL 2 EL profile \cite{krotzsch2012OWL}, whose semantics are defined according to the DL \el\ \citep{baader2005pushing}. Prominent examples include 
ELEm \cite{kulmanov2019embeddings} and its extension \emelpp \cite{mondala2021emel++}, which model concepts as high-dimensional balls, but fail to faithfully capture concept conjunction, since the intersection of two balls is no longer a ball. This led to the development of the state-of-the-art methods BoxEL \cite{xiong2022box} and ELBE \cite{peng2022description}, which instead represent concepts as boxes (i.e., axis-aligned hyperrectangles).
However, all of these approaches still rely on TransE \cite{bordes2013translating} to model roles (i.e., binary relations) as simple translations, which is unable to capture one-to-many, many-to-one, or many-to-many relationships \cite{wang2014knowledge,lin2015learning}, and is limited in its ability to faithfully represent inclusion relationships between roles. 
Furthermore, these current works focus on the basic task of predicting subsumptions between named concepts, without considering complex concepts defined with logical operators or complex logical relationships in evaluation.

In this paper, we propose \boxsqel, a novel OWL ontology embedding method targeting the semantics of \el, which has been widely adopted in many real-life large-scale ontologies \citep{smith2007obo,hoehndorf2011common}.\footnote{The logical constructors provided by \el\ are very common. While some OWL ontologies use more complicated features not supported by \el, \boxsqel\ can still be used in that case to model the subset of axioms that fall into \el.} To address the aforementioned limitations of existing approaches, we instead draw inspiration from BoxE~\citep{abboud2020boxe} and represent both relations and concepts as boxes, while modeling interactions between concepts via a bumping mechanism.
We not only demonstrate how \boxsqel\ overcomes the shortcomings of previous methods, but also prove that it is \textit{sound}, i.e., faithfully captures the semantics of the underlying DL, which shows its theoretical correctness and supports interpretable inference for ontology completion. We evaluate our method in the two different inductive reasoning settings of concept subsumption prediction\,---\, involving both named and complex concepts\,---\, and role assertion (link) prediction, and on approximating deductive reasoning. Our results demonstrate that the theoretical advantages of our approach manifest themselves in practice and lead to state-of-the-art performance across a variety of datasets.

\section{Background and Related Work}
\subsection{Description Logic Ontologies}
A DL ontology $\ont$ describes some domain of interest in terms of individuals, concepts and roles, where individuals correspond to objects in the domain, concepts represent sets of objects, and roles are binary relations between objects. We limit our discussion to the DL \el~\citep{baader2005pushing}, which underpins the OWL 2 EL profile~\citep{krotzsch2012OWL}. It is widely adopted since it contains many useful and important knowledge representation features, while allowing for reasoning in polynomial time.
Given sets $\mathcal N_I$, $\mathcal N_C$ and $\mathcal N_R$ of, respectively, individual, concept, and role names, \el\ concepts are recursively defined as
\begin{equation*}
\top\;|\;\bot\;|\;A\;|\;C\sqcap D\;|\;\exists r.C\;|\;\{a\}\nonumber
\end{equation*}
where $\top$ is the top concept, $\bot$ is the bottom concept, $A \in \mathcal{N}_C$ is an atomic (or \textit{named}) concept, $r \in \mathcal{N}_R$ is an atomic role, $a \in \mathcal{N}_I$ is an individual, and $C$ and $D$ are themselves (possibly complex) \el\ concepts. We say a concept is \textit{complex} when it is constructed with a logical operator such as $\sqcap$ or $\exists$.
An \el{} ontology $\mathcal{O}$ consists of a TBox $\mathcal{T}$ and an ABox $\mathcal{A}$. The TBox consists of logical background knowledge in the form of concept subsumption axioms $C \sqsubseteq D$ and role inclusion axioms $r_1\circ \cdots \circ r_k \sqsubseteq r$, while the ABox contains concrete data in the form of concept and role assertion axioms $C(a)$ and $r(a, b)$. 
Note that a relational fact from a KG in the form of an RDF triple $(a, r, b)$ is equivalent to a role assertion axiom in the form of $r(a, b)$, and thus ontologies can be seen as extending KGs with more complex conceptual and logical information.

\begin{example}
\label{ex:family}
    The following ontology models a simple family domain:
    \begin{align*}
        \mathcal{T} = \{&\asf{Father}\sqsubseteq \asf{Parent}\sqcap \asf{Male},\;
        \asf{Mother}\sqsubseteq \asf{Parent}\sqcap \asf{Female},\\
        &\asf{Child}\sqsubseteq\exists \asf{hasParent}.\asf{Father},\;
        \asf{Child}\sqsubseteq\exists \asf{hasParent}.\asf{Mother},\\
        &\asf{hasParent}\sqsubseteq\asf{relatedTo}
        \}\\
        \mathcal{A} = \{&\asf{Father}(\asf{Alex}), \asf{Child}(\asf{Bob}), \asf{hasParent}(\asf{Bob}, \asf{Alex})\}
    \end{align*}
    The TBox specifies that a father is a male parent, a mother is a female parent, every child has a father and a mother, and having a parent implies being related to that parent; the ABox states that Alex is a father, Bob is a child, and Alex is a parent of Bob.
\end{example}

Similarly to first order logic, the semantics of \el\ are defined in terms of \textit{interpretations} that map individuals to elements, concept names to subsets, and role names to binary relations over some set called the \textit{interpretation domain}. An interpretation $\mathcal I$ that satisfies the semantics of every axiom in $\ont$ is called a \textit{model} of $\ont$, denoted as $\mathcal I \models \ont$. See Appendix A for a formal discussion of the semantics.

\paragraph{Remark.} \el\ also allows for so-called \textit{concrete domains} (a.k.a.\ datatypes and values), which we do not consider in this paper. Technically, we work on the $\mathcal{ELHO}(\circ)^\bot$ subset of \el.

\subsection{Subsumption Inference}
A central problem in DLs is to infer concept subsumptions from an ontology $\ont$. In general, these subsumptions can involve both named and complex concepts.
Classical reasoning algorithms leverage the logical information in $\ont$ to derive subsumptions that logically follow from the semantics; for example, we can infer from the ontology in \cref{ex:family} that $\asf{Child}\sqsubseteq\exists\asf{relatedTo}.\asf{Father}$. In contrast, \textit{inductive} reasoning (also called \textit{prediction}) aims to infer \textit{probable} subsumptions from $\ont$. Note that when we limit ourselves to predicting subsumptions of the form $\{a\}\sqsubseteq \exists r. \{b\}$, which are equivalent to role assertion axioms $r(a,b)$, this is identical to the problem of \textit{link prediction} in KGs (i.e., KG completion).

The majority of the existing prediction methods focus only on subsumptions between named concepts. Some approaches embed the formal semantics defined by the DL, while others focus on utilizing textual information such as concept labels and comments.
For the former, please see \cref{sec:rw_dle}. 
For the latter, OPA2Vec \cite{smaili2019opa2vec} and OWL2Vec* \cite{chen2021owl2vec} use a Word2Vec model trained with local graph structure augmented corpora to embed the text for predicting subsumptions, while the recent method BERTSubs \cite{chen2023contextual} fine-tunes a BERT model together with an attached classifier for predicting subsumptions involving both named concepts and complex concepts.  
Although these works consider a small part of the formal semantics such as the concept hierarchy as the context of a concept for augmenting prediction, they do not model the (complete) logical relationships of the ontology in the vector space. 
They are complementary to semantic ontology embedding methods including \boxsqel, but jointly embedding DL semantics and textual information is out of the scope of this paper.
 
\subsection{Knowledge Graph Embeddings}
KG embedding models such as TransE \cite{bordes2013translating}, DistMult \cite{yang2015embedding}, ComplEx \cite{trouillon2016Complex} and BoxE \cite{abboud2020boxe} aim to solve the problem of completing KGs composed of purely relational facts, and can be thought of as modeling only the role assertion part of the ABox of an OWL ontology \cite{wang2017knowledge}.
In particular, BoxE \cite{abboud2020boxe} also represents relations as boxes and adopts bump vectors, but it models relational facts alone, whereas we aim at much more complex DL ontologies with logical relationships involving concepts and roles.

Some KG embedding methods take background knowledge into account and are therefore related to ontology embedding techniques. However, these methods still focus only on modeling relational facts in a KG, using the background knowledge as constraints, and most of them only support logical rules concerning relations \cite{rocktaschel2015Injecting,wang2015Knowledge,guo2016Jointly,nayyeri2020Fantastic,nayyeri2021LogicENN} or schemas in simple languages like RDF Schema \cite{hao2019universal,xiang2021ontoea}. 
In contrast, ontology embedding methods including \boxsqel\ focus on OWL ontologies, which contain a large quantity of conceptual knowledge in the form of subsumptions and logical relationships. Furthermore, these methods can only be applied in the setting of link prediction, whereas ontology embeddings also allow predicting novel conceptual information or logical background knowledge itself.

\subsection{Semantic Ontology Embeddings}\label{sec:rw_dle}
Several ontology embedding methods for DL semantics have been proposed by learning geometric models.
ELEm \cite{kulmanov2019embeddings} is among the first to embed \el, and \emelpp \cite{mondala2021emel++} extends ELEm by considering role inclusion axioms. 
However, both methods represent concepts as high-dimensional balls, which have the disadvantage of not being closed under intersection.
Our concept representation based on boxes has previously been used in the two recent methods BoxEL \cite{xiong2022box} and ELBE \cite{peng2022description}. \citet{mondala2021emel++} is the only other technique we are aware of that also models role inclusion axioms; the other methods consider only a smaller subset of \el\@. All previous methods simply model roles (binary relations) by a single vector-based translation as in TransE~\citep{bordes2013translating}, which fails to faithfully capture one-to-many, many-to-one, and many-to-many relations \citep{wang2014knowledge,lin2015learning}. 
We use box-based modeling in combination with bump vectors to address the above problem, achieving better performance in ontology completion. 

Going beyond \el, \citet{ozcep2020Cone} introduce a cone-based model for the more expressive DL \alc; however, their contribution is mainly theoretical since they provide neither an implementation nor an evaluation. Embed2Reason~\citep{garg2019quantum} is another embedding approach for \alc\ ontologies based on quantum logic~\citep{birkhoff1936Logic}. In contrast to our work, its focus is on ABox instead of subsumption reasoning. 
Finally, it is worth mentioning that in comparison with all these DL embedding works, we conduct a more thorough evaluation by considering predicting subsumptions between not only named concepts, but also named concepts and complex concepts involving logical operators.

\begin{figure*}
    \centering
    \includegraphics[scale=0.5]{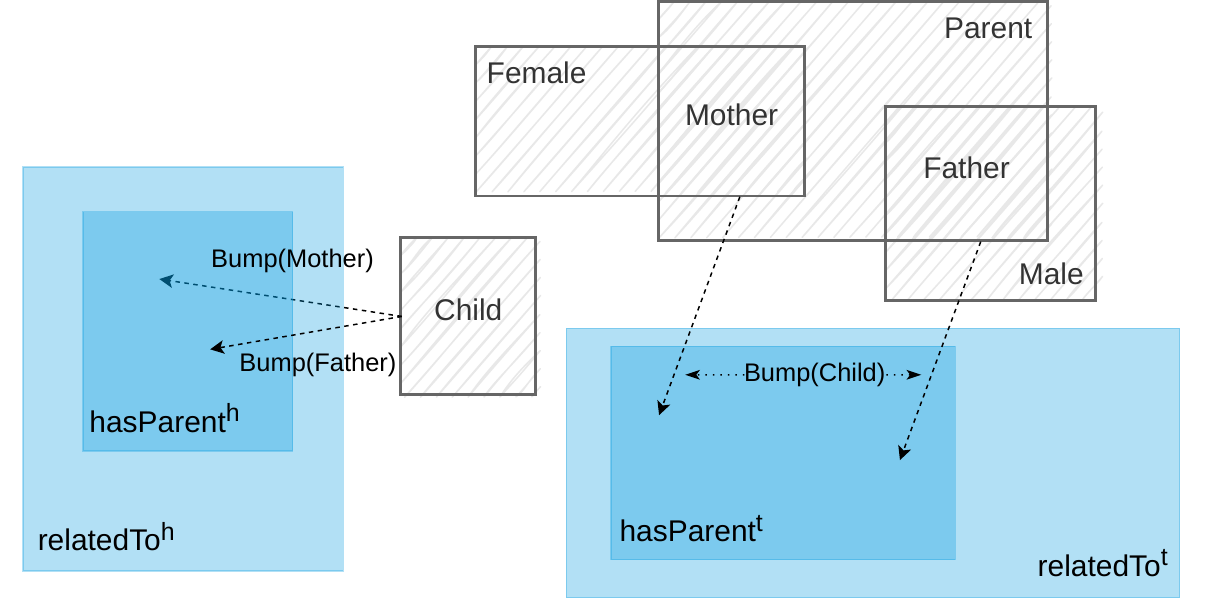}
    \caption{An illustration of \boxsqel\ embeddings. Striped boxes represent concept embeddings, whereas role embeddings are shaded blue and labelled as $\asf{r}^{\asf{h}}$ or \ $\asf{r}^{\asf{t}}$ for the head or tail box of $\asf r$, respectively. Bump vectors are drawn as arrows and labelled with the corresponding concept. The illustrated embeddings form a logical model of the TBox in \cref{ex:family}.}
    \label{fig:boxsqel}
    \Description{An illustration of BoxSquaredEL embeddings. The figure illustrates how the learned embeddings can capture the semantics of an ontology. For example, the embedding box for the concept Mother is exactly the intersection of the embeddings of Parent and Female.}
\end{figure*}

\section{Method}
In order to perform inductive reasoning over an \el\ ontology $\ont$ with signature $\Sigma = (\mathcal N_C, \mathcal N_R, \mathcal N_I)$, we follow the general framework of \citet{kulmanov2019embeddings} and learn embeddings that correspond to \textit{geometric models} of $\ont$; that is, (logical) models with interpretation domain $\Delta = \mathbb R^n$. We now specify how \boxsqel\ maps concepts, individuals, and roles to the embedding space $\mathbb R^n$ and describe the loss functions that encode the axioms of $\ont$.

\subsection{Geometric Representation}
\paragraph{Concepts and individuals.}
We follow recent work~\citep{peng2022description,xiong2022box} and represent concepts as $n$-dimensional \textit{boxes}, i.e., \textit{axis-aligned hyperrectangles}. This representation has several advantages over the alternative based on $n$-dimensional \textit{balls}, such as closure under concept intersection, and has been shown to work well in practice. Formally, we associate with every concept $C\in \mathcal N_C$ two vectors $\bm l_C\in\mathbb R^n$ and $\bm u_C\in\mathbb R^n$ such that $\bm l_C \leq \bm u_C$, where $\leq$~is applied element-wise. These vectors form the \textit{lower} and \textit{upper} corner of the box of $C$, i.e., $\boxx(C) = \{\,\bm x\in\mathbb R^n \mid \bm l_C \leq \bm x \leq \bm u_C \,\}$. The \textit{center} of $\boxx(C)$ is given by $(\bm l_C + \bm u_C) / 2$, and its \textit{offset} is $(\bm u_C - \bm l_C) / 2$.

We represent individuals $a\in \mathcal N_I$ as \textit{points} $\bm e_a \in \mathbb R^n$ in the embedding space. For notational convenience, nominals (e.g., $\{a\}$) are then formally mapped to boxes with volume 0, i.e., $\bm l_{\{a\}} = \bm u_{\{a\}} = \bm e_a$.

\paragraph{Roles.}
While most existing \el\ embedding models represent roles (binary relations) via simple translations in the form of TransE \citep{bordes2013translating}, we instead follow the idea of the BoxE KG embedding model~\citep{abboud2020boxe}.
That is, we associate every role $r\in\mathcal N_R$ with a \textit{head box} $\head(r)$ and a \textit{tail box} $\tail(r)$. Intuitively, every point in the head box is related via $r$ to every point in the tail box. This representation is made more expressive by introducing translational bump vectors $\bump(C) \in \mathbb R^n$ for every atomic concept $C$, which model interactions between related concepts by dynamically translating their embeddings. An axiom of the form $C\sqsubseteq\exists r.D$ is then considered to hold if
\begin{align}
\label{eq:roles}
    \begin{split}
        \boxx(C) \oplus \bump(D)&\subseteq\head(r)\quad\text{and}\\
        \boxx(D) \oplus \bump(C)&\subseteq\tail(r),
    \end{split}
\end{align}
where $\oplus$ denotes the operation of translating a box along a bump vector, i.e., $\boxx(C) \oplus \bump(D) = \{\, \bm x + \bump(D) \mid \bm x\in\boxx(C) \,\}$.

\begin{example}
     \cref{fig:boxsqel} illustrates 2-dimensional \boxsqel\ embeddings that form a logical model of the TBox in \cref{ex:family}, since, e.g., $\boxx(\asf{Father}) \subseteq \boxx(\asf{Parent}) \cap \boxx(\asf{Male})$ and \cref{eq:roles} holds for all relevant axioms. We will formalise the logical geometric model associated with a set of \boxsqel\ embeddings in \cref{theo:soundness}. Note also that the embeddings furthermore imply the subsumption $\asf{Child}\sqsubseteq\exists\asf{relatedTo}.\asf{Father}$ (again by \cref{eq:roles}), illustrating their utility for approximate reasoning.
\end{example}

\paragraph{Expressiveness.}
The previous example illustrates the expressive power of \boxsqel\@. Representing roles as head and tail boxes allows modeling one-to-many relationships such as \tsf{hasParent} faithfully, in contrast to previous approaches that employ a TransE-based role representation. Moreover, role inclusion axioms such as $\asf{hasParent}\sqsubseteq\asf{relatedTo}$ can be represented naturally via inclusion constraints on the relevant head and tail boxes. In contrast, previous methods either do not consider role inclusion axioms at all or only rudimentarily approximate them by forcing the embeddings of the involved roles to be similar~\citep{mondala2021emel++}.

\paragraph{Model complexity.} \boxsqel\ requires $2n|\mathcal N_C| + n|\mathcal N_I|$ parameters to store the lower and upper corners of the box embeddings of concepts and points associated with individuals. In order to represent the head and tail boxes for every relation and a bump vector per concept and individual, the model needs $4n|\mathcal N_R| + n(|\mathcal N_C| + |\mathcal N_I|)$ additional parameters. Therefore, the total space complexity of \boxsqel\ is in $O\Bigl(n\bigl(3|\mathcal N_C| + 2|\mathcal N_I| + 4|\mathcal N_R|\bigr)\Bigr)$.

\subsection{Training Procedure}\label{sec:training}
In order to learn embeddings for $\ont$, we first convert its ABox axioms into equivalent TBox axioms using the transformation rules
\begin{align*}
    C(a)\;&\rightsquigarrow\; \{a\}\sqsubseteq C\\
    r(a,b)\; &\rightsquigarrow\; \{a\}\sqsubseteq\exists r.\{b\}
\end{align*}
and then normalize the axioms using the standard procedure in \cite{baader2005pushing,baader2005Pushinga}. This results in a normalized ontology in which all axioms are in one of the normal forms described below. Crucially, this procedure is semantics-preserving: every model of the normalized ontology is also a model of the original ontology~\citep{baader2005pushing}. For more details on normalization, see Appendix B\@. We introduce separate loss functions for axioms in each normal form, which intuitively ensure that the learned embeddings adhere to the semantics of $\ont$. Finally, we minimize the sum of all loss terms $\loss(\mathcal O)$ via mini-batch gradient descent.

Our loss functions are based on the distance-based loss formulation in~\citep{kulmanov2019embeddings,peng2022description} 
and aim to minimize the element-wise distance between the embeddings of related concepts. Given two arbitrary boxes $A$ and $B$, this element-wise distance is computed as
\begin{equation*}
    \bm d(A, B) = \abs{\bm c(A) - \bm c(B)} - \bm o(A) - \bm o(B),
\end{equation*}
where $\bm c(\cdot)$ and $\bm o(\cdot)$ denote the center and offset of a box, respectively. Note that for nominals $\{a\}$ we have that $\bm c(\boxx(\{a\})) = \bm e_a$ and $\bm o(\boxx(\{a\})) = \bm 0$.

\paragraph{Generic inclusion loss.}
Before defining loss functions for the different \el\ normal forms, we first introduce a generic inclusion loss $\mathcal L_\subseteq(A,B)$. It encourages box $A$ to be contained in box $B$ and is defined as
\begin{equation*}
    \mathcal L_\subseteq(A, B) = 
    \begin{cases}
        \norm{\max\{\bm 0,\, \bm d(A, B) + 2\bm o(A) - \gamma\}} & \text{if } B \neq \emptyset\\
        \max\{0, \bm o(A)_1 + 1\} & \text{otherwise},
    \end{cases}
\end{equation*}
where $\gamma\in\mathbb R$ is a margin hyperparameter. If $\mathcal L_\subseteq(A, B) = 0$, either $A$ lies within $\gamma$-distance of $B$ in each dimension, or both $A$ and $B$ are empty.

We next introduce each normal form and the corresponding loss function. 
Note that all concepts in the normal forms below are atomic concepts or nominals (and not complex).

\paragraph{First normal form (NF1).}
For an NF1 axiom of the form $C\sqsubseteq D$, the learned embeddings need to satisfy $\boxx(C)\subseteq \boxx(D)$, corresponding to the semantics of concept inclusion. Therefore, we define the loss for NF1 as simply the inclusion loss:
\begin{equation*}
    \loss_{1}(C, D) = \loss_\subseteq(\boxx(C), \boxx(D)).
\end{equation*}

\paragraph{Second normal form (NF2).}
For an NF2 axiom of the form $C\sqcap D\sqsubseteq E$, we similarly require that the intersection of the boxes of $C$ and $D$ is within the box of $E$. The intersection of $\boxx(C)$ and $\boxx(D)$ is itself a box with lower corner $\max\{\bm l_C, \bm l_D\}$ and upper corner $\min\{\bm u_C, \bm u_D\}$, where max and min are applied element-wise. We thus have
\begin{equation*}
    \loss_{2}(C, D, E) = \loss_\subseteq\Bigl(\boxx(C)\cap \boxx(D), \boxx(E)\Bigr).
\end{equation*}
However, this formulation is problematic since it can be easily minimized to 0 by setting $\boxx(C)$ and $\boxx(D)$ to be disjoint. While disjoint embeddings for $C$ and $D$ would technically not violate the semantics, usually an axiom of the form $C\sqcap D\sqsubseteq\bot$ would have been used directly if it had been the intention that $C$ and $D$ should be disjoint. Therefore, we add the additional term
\begin{equation*}
    \norm{\max\left\{\bm 0,\,\max\{\bm l_{C}, \bm l_{D}\} - \min\{\bm u_{C},\bm u_{D}\}\right\}}
\end{equation*}
to the loss, which intuitively encourages $\boxx(C) \cap \boxx(D)$ to be non-empty by making all elements of its offset vector positive.

\paragraph{Third normal form (NF3).}
For an NF3 axiom of the form $C\sqsubseteq\exists r. D$, the embeddings should satisfy $\boxx(C) + \bump(D)\subseteq\head(r)$ and $\boxx(D) + \bump(C)\subseteq\tail(r)$. This is captured by the following loss function:
\begin{equation*}
    \begin{split}
        \loss_{3}(C, r, D) = \frac{1}{2}\Bigl(\loss_\subseteq(\boxx(C) + \bump(D), \head(r))\\ +\,\loss_\subseteq(\boxx(D) + \bump(C), \tail(r))\Bigr).
    \end{split}
\end{equation*}
If $\boxx(D)$ is empty, we furthermore add the term $\loss_\subseteq(\boxx(C),\emptyset)$ to also make $\boxx(C)$ empty.

\begin{figure}
    \centering
    \resizebox{\columnwidth}{!}{%
    \begin{minipage}{\columnwidth}
    \begin{align*}
    \asf{Father}&\sqsubseteq\asf{Male}\sqcap\asf{Parent}  &
    \asf{Mother}&\sqsubseteq\asf{Female}\sqcap\asf{Parent}\\
    \asf{Male}\sqcap\asf{Parent}&\sqsubseteq\asf{Father} &
    \asf{Female}\sqcap\asf{Parent}&\sqsubseteq\asf{Mother}\\
    \asf{Male}\sqcap\asf{Female}&\sqsubseteq\bot &
    \asf{Parent}\sqcap\asf{Child}&\sqsubseteq\bot\\
    \asf{Child}&\sqsubseteq\exists\asf{hasParent}.\asf{Mother} &
    \asf{Child}&\sqsubseteq\exists\asf{hasParent}.\asf{Father}\\
    \asf{Parent}&\sqsubseteq\exists\asf{hasChild}.\asf{Child}
    \end{align*}
    \end{minipage}
    }
    \caption{Proof of concept ontology.}
    \label{fig:family_ontology}
\end{figure}

\paragraph{Fourth normal form (NF4).}
For an NF4 axiom of the form $\exists r. C\sqsubseteq D$, we need to ensure that all points in the embedding space that are connected to $C$ via role $r$ are contained in $\boxx(D)$. It can be seen from our geometric representation that the set of these points is contained in the set $\head(r) - \bump(C)$. We therefore define the loss for the fourth normal form as
\begin{equation*}
    \loss_{4}(r, C, D) = \loss_{\subseteq}(\head(r) - \bump(C), \boxx(D)).
\end{equation*}

\paragraph{Fifth normal form (NF5).}
Axioms of the fifth normal form $C\sqcap D\sqsubseteq\bot$ state that concepts $C$ and $D$ are disjoint. Our corresponding loss function penalizes embeddings for which the element-wise distance is not greater than 0 (within a margin of $\gamma$) and is defined as
\begin{equation*}
    \loss_5(C, D) = \norm{\max\{\bm 0,\, -(\bm d(\boxx(C), \boxx(D)) + \gamma)\}}.
\end{equation*}

\paragraph{Role inclusion axioms.} After normalization, role inclusion axioms are either of the form $r\sqsubseteq s$ or $r_1\circ r_2\sqsubseteq s$. For the first case, we define the loss function
\begin{equation*}
    \loss_6(r, s) = \frac{1}{2}\Bigl(\loss_\subseteq(\head(r), \head(s))\\ +\,\loss_\subseteq(\tail(r), \tail(s))\Bigr),
\end{equation*}
which intuitively makes any embeddings related via role $r$ also related via $s$. Similarly, in the second case we have the loss
\begin{equation*}
    \loss_7(r_1, r_2, s) = \frac{1}{2}\Bigl(\loss_\subseteq(\head(r_1), \head(s))\\ +\,\loss_\subseteq(\tail(r_2), \tail(s))\Bigr).
\end{equation*}

\paragraph{Regularization.}
In order to prevent our expressive role representation from overfitting, we furthermore $L^2$-regularize the bump vectors by adding the regularization term
\begin{equation*}
    \lambda \sum_{C\in \mathcal N_C \cup \mathcal N_I} \norm{\bump(C)},
\end{equation*}
where $\lambda$ is a hyperparameter.

\paragraph{Negative sampling.}
In addition to the above loss functions, we also employ \textit{negative sampling} to further improve the quality of the learned embeddings. We follow previous work~\citep{kulmanov2019embeddings,peng2022description} and generate negative samples for axioms of the form $C\sqsubseteq\exists r. D$ by replacing either $C$ or $D$ with a randomly selected different concept, similar to negative sampling in KGs~\citep{bordes2013translating}. For every NF3 axiom we generate a new set of $\omega \geq 1$ negative samples $C\not\sqsubseteq \exists r.D$ in every epoch, for which we optimize the loss
\begin{align*}
    \loss_{\not\sqsubseteq}(C, r, D) ={}
    &\left(\delta - \mu(\boxx(C) + \bump(D),\, \head(r))\right)^2\\
    {}+{}&\left(\delta - \mu(\boxx(D) + \bump(C),\, \tail(r))\right)^2,
\end{align*}
where $\mu(A, B) = \norm{\max\{\bm 0,\, \bm d(A, B) + \gamma\}}$ is the minimal distance between any two points in $A$ and $B$ and $\delta > 0$ is a hyperparameter that controls how unlikely the negative samples are made by the model.

As in the case of KGs, the above procedure may occasionally generate false negative samples, i.e., negative axioms that actually do follow from $\ont$. However, on average it will predominantly produce true negatives, which we find to empirically improve the learned embeddings. 

\subsection{Soundness}
We now show that the embeddings learned by \boxsqel\ indeed correspond to a logical geometric model of the given ontology $\ont$.

\begin{theorem}[Soundness]
    \label{theo:soundness}
    Let $\mathcal O = (\mathcal T, \mathcal A)$ be an \el\ ontology. If $\gamma\leq 0$ and there exist \boxsqel\ embeddings in $\mathbb R^n$ such that $\loss(\mathcal O) = 0$, then these embeddings are a model of $\mathcal O$.
\end{theorem}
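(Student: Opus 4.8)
The plan is to exhibit the geometric interpretation $\mathcal{I}$ determined by the embeddings and verify directly that $\loss(\ont) = 0$ forces $\mathcal{I}$ to satisfy every axiom. I would first fix $\Delta^{\mathcal{I}} = \mathbb{R}^n$, set $C^{\mathcal{I}} = \boxx(C)$ for every atomic concept, $a^{\mathcal{I}} = \bm{e}_a$ for every individual, and interpret each role $r$ by the binary relation on $\mathbb{R}^n$ induced by its head and tail boxes together with the concept bumps, in the manner dictated by \cref{eq:roles}. Since the ABox-to-TBox rewriting and the normalization procedure both preserve models over the original signature (standard; see \citealp{baader2005pushing}, noting that Box$^2$EL also learns embeddings for the fresh auxiliary names), it suffices to show that $\mathcal{I}$ satisfies every \emph{normalized} axiom. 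Because $\loss(\ont)$ is a sum of non-negative terms, $\loss(\ont) = 0$ forces each individual loss term to vanish, so I can argue one normal form at a time.

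The technical backbone is a single containment lemma: for $\gamma \leq 0$ and $B \neq \emptyset$, $\loss_\subseteq(A, B) = 0$ implies $A \subseteq B$. This follows by unfolding $\bm{d}(A,B)$: the condition $\max\{\bm{0},\, \bm{d}(A,B) + 2\bm{o}(A) - \gamma\} = \bm{0}$ is exactly $\abs{\bm{c}(A) - \bm{c}(B)} + \bm{o}(A) - \bm{o}(B) \leq \gamma \leq 0$ element-wise, which is precisely the criterion for $A$ to be contained in $B$. The hypothesis $\gamma \leq 0$ is used here and essentially only here: it converts ``containment up to margin $\gamma$'' into genuine containment. The empty-$B$ branch of $\loss_\subseteq$ analogously forces the offset of $A$ to be negative, i.e.\ $A$ empty.

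Armed with this lemma, most normal forms are immediate. NF1 gives $\boxx(C) \subseteq \boxx(D)$ directly. For NF2 I would observe that $\boxx(C) \cap \boxx(D)$ is the set-theoretic intersection and equals $(C \sqcap D)^{\mathcal{I}}$, so $\loss_2 = 0$ yields $(C \sqcap D)^{\mathcal{I}} \subseteq E^{\mathcal{I}}$ (the extra non-emptiness term is irrelevant to soundness). For NF5, $\loss_5 = 0$ unfolds to the element-wise condition $\bm{d}(\boxx(C), \boxx(D)) \geq -\gamma$, which (as $-\gamma \geq 0$) separates the two boxes in every dimension and hence makes them disjoint, so $(C \sqcap D)^{\mathcal{I}} = \emptyset$. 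The role-inclusion losses $\loss_6$ and $\loss_7$ reduce, via the lemma, to inclusions between the relevant head and tail boxes, from which $r^{\mathcal{I}} \subseteq s^{\mathcal{I}}$ (respectively the composition inclusion) follows from the definition of the role interpretation.

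The substantive cases are NF3 and NF4, and this is where I expect the main obstacle to lie. One must verify that a single fixed relation $r^{\mathcal{I}}$ on $\mathbb{R}^n$ simultaneously validates the existential-on-the-right reading ($C \sqsubseteq \exists r.D$) and the existential-on-the-left reading ($\exists r.C \sqsubseteq D$), even though bumps are attached to named concepts rather than to bare domain points. For NF3, $\loss_3 = 0$ yields the two containments of \cref{eq:roles}; I would then pick, for each $\bm{x} \in \boxx(C)$, a witness $\bm{y} \in \boxx(D)$ (non-empty by the dedicated empty-box term) and check that $\bm{x}$ bumped by $D$ lands in $\head(r)$ while $\bm{y}$ bumped by $C$ lands in $\tail(r)$, so that $(\bm{x}, \bm{y}) \in r^{\mathcal{I}}$. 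For NF4, $\loss_4 = 0$ gives $\head(r) - \bump(C) \subseteq \boxx(D)$, and I would argue that every $r$-predecessor of a point in $C^{\mathcal{I}}$ necessarily lies in $\head(r) - \bump(C)$, hence in $D^{\mathcal{I}}$. The delicate point — the step requiring the most care — is pinning down $r^{\mathcal{I}}$ so that the bump applied to a point is consistent in both directions and compatible across all axioms, while correctly handling the translation operation $\oplus$ and the degenerate empty-box branches flagged in the loss definitions.
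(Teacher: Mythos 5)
Your overall strategy is the same as the paper's: build the geometric interpretation ($\Delta^{\mathcal I}=\mathbb R^n$, $C^{\mathcal I}=\boxx(C)$, $a^{\mathcal I}=\bm e_a$), invoke a containment lemma stating that $\gamma\leq 0$ and $\loss_\subseteq(A,B)=0$ imply $A\subseteq B$, and check each normal form; your derivation of that lemma and your handling of NF1, NF2, NF5 and the role-inclusion losses all match the paper and are correct. However, there is a genuine gap exactly at the point you flag as "delicate": you never actually define $r^{\mathcal I}$, and no bump-dependent definition can work. A binary relation on $\mathbb R^n$ cannot refer to "the bump of the concept the point belongs to" (a point may lie in many concept boxes or none), so the only way to make your NF3 witness argument type-check is to define $(\bm x,\bm y)\in r^{\mathcal I}$ via existential quantification over pairs of concepts whose bumps carry $\bm x$ into $\head(r)$ and $\bm y$ into $\tail(r)$; but then NF4 breaks, because for $\exists r.C\sqsubseteq D$ the witnessing concepts for a pair $(\bm x,\bm y)\in r^{\mathcal I}$ with $\bm y\in\boxx(C)$ need not include $C$ itself, and $\loss_4=0$ (i.e.\ $\head(r)-\bump(C)\subseteq\boxx(D)$) then says nothing about $\bm x$. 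Conversely, the bump-free definition $r^{\mathcal I}=\head(r)\times\tail(r)$ breaks NF3, since $\loss_3=0$ only gives $\boxx(C)+\bump(D)\subseteq\head(r)$, not $\boxx(C)\subseteq\head(r)$.

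The missing idea, which is the one non-obvious step in the paper's proof, is that $\loss(\ont)$ includes the regularization term $\lambda\sum_{C\in\mathcal N_C\cup\mathcal N_I}\norm{\bump(C)}$, so $\loss(\ont)=0$ forces $\bump(C)=\bm 0$ for every concept and individual (taking $\lambda>0$). With all bumps annihilated, the paper can soundly set $r^{\mathcal I}=\head(r)\times\tail(r)$, and then both NF3 and NF4 collapse to direct applications of the containment lemma: $\loss_3=0$ gives $\boxx(C)\subseteq\head(r)$ and $\boxx(D)\subseteq\tail(r)$ (plus the empty-box side condition you correctly noted), and $\loss_4=0$ gives $\head(r)\subseteq\boxx(D)$. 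Without this observation your sketch cannot be completed as written, so you should add it; everything else in your proposal then goes through essentially as in the paper.
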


\begin{proofsketch}
The proof relies on the correctness of our loss functions, i.e., if $\loss_\subseteq(A, B) = 0$ then $A\subseteq B$. We first construct a geometric interpretation $\mathcal I$ of $\ont$ from the embeddings by interpreting every individual as its associated point, every concept as the set of points in its associated box, and every role as
the Cartesian product of its head box and its tail box. Since the loss for every axiom is 0, the embeddings satisfy the corresponding semantics. We thus have that $\mathcal I$ is a model of $\ont$.
\end{proofsketch}

A formal proof of \cref{theo:soundness} is given in Appendix C\@. While our optimization procedure might not achieve a loss of 0 in practice, the importance of \cref{theo:soundness} is that it demonstrates that the learned embeddings converge to a semantically meaningful representation of the ontology in which all of its axioms are satisfied, i.e., a model of $\ont$. The embeddings therefore indeed encode the axioms in $\ont$ and are thus useful to perform inductive or approximate reasoning.

\section{Evaluation}
We first validate our model and demonstrate its expressiveness on a proof of concept ontology.
We then evaluate \boxsqel\ on three tasks: general subsumption prediction, link (role assertion) prediction, and approximating deductive reasoning. Furthermore, we conduct a variety of ablation studies whose results are shown in Appendix~E\@. Our implementation is based on PyTorch~\citep{paszke2019pyTorch}, and we use the \textsf{jcel} reasoner~\citep{mendez2012jcel} to transform ontologies into normal form axioms.

\begin{figure}
    \centering
    \begin{subfigure}{.49\columnwidth}
        \centering
        \includegraphics[width=\textwidth]{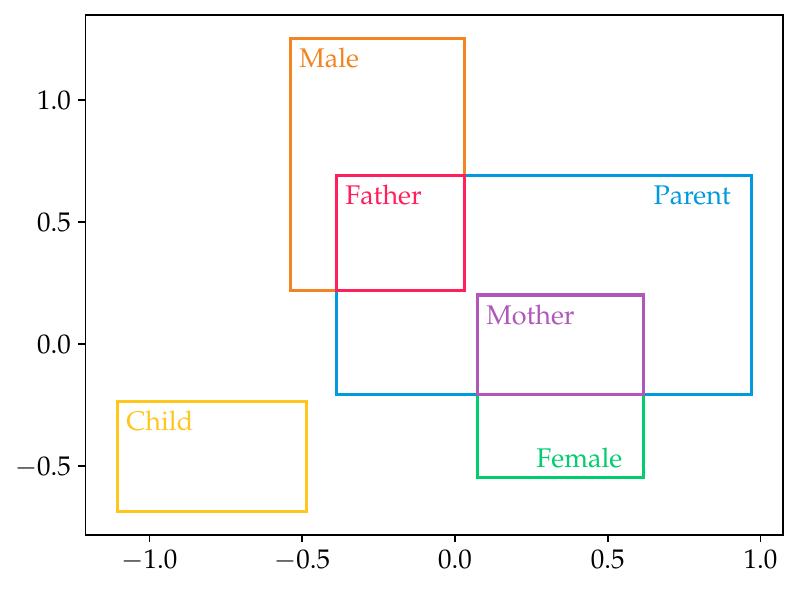}
        \caption{\boxsqel}
    \end{subfigure}
    \begin{subfigure}{.49\columnwidth}
        \centering
        \includegraphics[width=\textwidth]{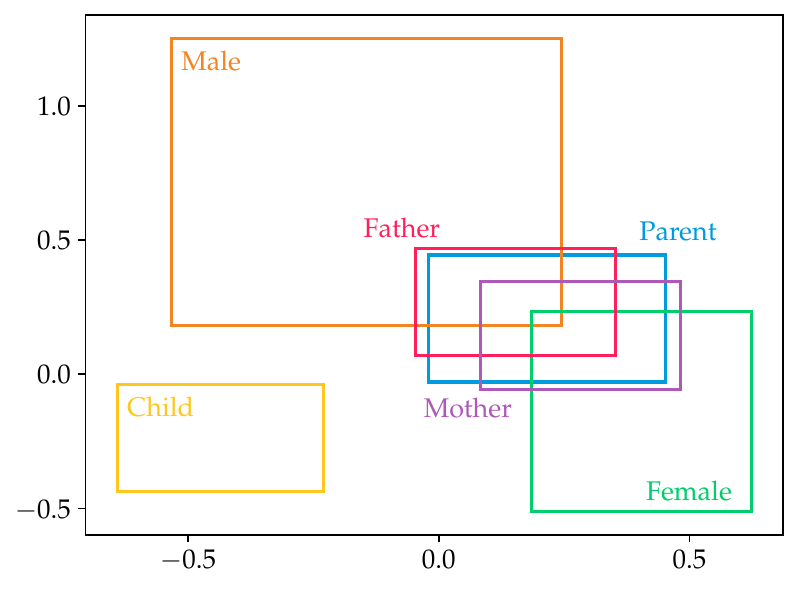}
        \caption{ELBE}
    \end{subfigure}
    \caption{Visualization of the embeddings learned by \boxsqel\ (left) and ELBE (right) for the proof-of-concept ontology. While \boxsqel\ can accurately represent the axioms in the ontology, the limitations of TransE as a model for roles prevent ELBE from learning correct embeddings.}
    \label{fig:family}
    \Description{The figure shows the embeddings learned by BoxSquaredEL and ELBE. The boxes representing the embeddings for BoxSquaredEL accurately match the semantics of the ontology, whereas the embeddings for ELBE are overlapping. For example, ELBE learns overlapping embeddings for Mother and Father.}
\end{figure}

\subsection{Proof of Concept: Family Ontology}
We visualize the embeddings learned by \boxsqel\ for a proof of concept family ontology given in \cref{fig:family_ontology}. To this end, we train \boxsqel\ with an embedding dimensionality of $n=2$, a margin of $\gamma=0$, regularization strength $\lambda=1$, and no negative sampling. We also train embeddings for ELBE~\citep{peng2022description}, a comparable \el\ ontology embedding model that similarly uses boxes to represent concepts, but interprets roles as translations. In order to ensure the volume of the learned embeddings is large enough for plotting, we add the following visualization loss term to the objective function of both models:
\begin{equation*}
    \loss_V = \frac{1}{n|\mathcal N_C|}\sum_{C\in\mathcal N_C}\sum_{i=1}^n \max\{0, 0.2 - \bm o(\boxx(C))_i\}.
\end{equation*}

The learned concept embeddings of both models are depicted in \cref{fig:family}. We see that \boxsqel\ is able to successfully learn embeddings that align with the axioms in the ontology. In particular, the embeddings fulfill all disjointness axioms and correctly represent the relationships between the concepts \tsf{Father}, \tsf{Male}, \tsf{Mother}, \tsf{Female}, and \tsf{Parent}.

\begin{table}[t]
      \centering
        \caption{Overall subsumption prediction results combined across normal forms.}
        \label{tab:combined}
        \resizebox{\columnwidth}{!}{%
        \begin{tabular}{@{}llrrrrrrr@{}}
        \toprule
                          & Model & H@1 & H@10       & H@100 & Med  & MRR  & MR   & AUC  \\ \midrule
        \multirow{5}{*}{\rotatebox[origin=c]{90}{GALEN}}   & ELEm  & 0.01   & 0.12          & 0.29     & 1662 & 0.05 & 5153 & 0.78 \\
        & \emelpp  & 0.01   & 0.11          & 0.24     & 2295 & 0.05 & 5486 & 0.76\\
        & BoxEL & 0.00 & 0.03	& 0.16	& 4785	& 0.01	& 7163	& 0.69\\
        & ELBE  & 0.02   & 0.14          & 0.27     & 1865 & 0.06 & 5303 & 0.77 \\
        & \boxsqel & \textbf{0.05} & \textbf{0.20} & \textbf{0.35} & \textbf{669} & \textbf{0.10} & \textbf{4375} & \textbf{0.81} \\ \midrule
        \multirow{5}{*}{\rotatebox[origin=c]{90}{GO}}      & ELEm  & 0.03   & \textbf{0.24} & 0.43     & 272  & 0.09 & 6204 & 0.86 \\
        & \emelpp  & 0.03   & 0.23          & 0.38     & 597 & 0.09 & 6710 & 0.85\\
        & BoxEL & 0.01	& 0.06	& 0.08	& 8443	& 0.03	& 14905	& 0.68\\
        & ELBE  & 0.01   & 0.10          & 0.22     & 1838 & 0.04 & 6986 & 0.85 \\
        & \boxsqel & \textbf{0.04} & 0.23          & \textbf{0.59} & \textbf{48}  & \textbf{0.10} & \textbf{3248} & \textbf{0.93} \\ \midrule
        \multirow{5}{*}{\rotatebox[origin=c]{90}{Anatomy}} & ELEm  & 0.10   & 0.40          & 0.64     & 22   & 0.19 & 6464 & 0.94 \\
        & \emelpp  & 0.11   & 0.36          & 0.57     & 36 & 0.19 & 8472 & 0.92\\
        & BoxEL & 0.03	& 0.12	& 0.28	& 1151	& 0.06	& 10916	& 0.90\\
        & ELBE  & 0.04   & 0.36          & 0.63     & 29   & 0.15 & 5400 & 0.95 \\
        & \boxsqel & \textbf{0.16} & \textbf{0.47} & \textbf{0.70} & \textbf{13}  & \textbf{0.26} & \textbf{2675} & \textbf{0.97} \\ \bottomrule
    \end{tabular}%
    }
\end{table}%

In contrast, we find that the embeddings learned by ELBE violate several of the axioms in the ontology. This is due to the inability of the underlying TransE model to correctly represent one-to-many relationships: because the ontology contains the axioms $\asf{Child}\sqsubseteq\exists\asf{hasParent}.\asf{Mother}$ as well as $\asf{Child}\sqsubseteq\exists\asf{hasParent}.\asf{Father}$, the model is forced to let the embeddings of \tsf{Mother} and \tsf{Father} overlap.

\subsection{General Subsumption Prediction}
We next evaluate \boxsqel\ on general subsumption prediction for inductive reasoning. In contrast to previous work~\citep{mondala2021emel++,xiong2022box}, we not only consider subsumptions between atomic (named) concepts, but also the more challenging task of predicting subsumptions between atomic concepts and complex concepts.

\paragraph{Benchmark.} We introduce a modification of the benchmark based on three biomedical ontologies GALEN~\citep{rector1996galen}, Gene Ontology~(GO)~\citep{ashburner2000gene} and Anatomy (a.k.a.\ Uberon)~\citep{mungall2012uberon}, that has been considered in previous DL embedding works \citep{mondala2021emel++,xiong2022box}. For each ontology, our benchmark consists of axioms split into training~(80\%), validation~(10\%), and testing~(10\%) sets for each normal form. This enables us to evaluate ontology embedding models on subsumption prediction between atomic concepts (NF1), atomic concepts and conjunctions (NF2), and atomic concepts and existential restrictions (NF3 and NF4). We report statistics on the sizes of these ontologies, including the numbers of axioms of different forms, in Table 7 in the appendix.

\paragraph{Baselines.} We compare \boxsqel\ with the state-of-the-art ontology embedding methods ELEm~\citep{kulmanov2019embeddings}, \emelpp~\citep{mondala2021emel++}, BoxEL~\citep{xiong2022box}, and ELBE~\citep{peng2022description}. We do not consider any traditional KG embedding methods in our experiments, since they have been shown to be considerably outperformed by ontology embedding methods~\citep{mondala2021emel++,xiong2022box} and are not applicable in the setting of complex concepts.

\paragraph{Evaluation protocol.} To evaluate the subsumption prediction performance, we follow the literature~\citep{mondala2021emel++,xiong2022box} and report a variety of ranking-based metrics on the testing set. Given a test axiom in some normal form, we generate a set of candidate predictions by replacing the atomic side of the subsumption with all the atomic concepts in $\mathcal N_C$. We then rank all candidate predictions by a score based on the distance between the embeddings of the concepts of the subsumption (for details see Appendix F) and record the rank of the true axiom. We report the standard metrics Hits@$k$ (H@$k$), where $k\in\{1, 10, 100\}$, the median rank (Med), the mean reciprocal rank (MRR), the mean rank (MR), and the area under the ROC curve (AUC). These metrics are computed for the axioms in each normal form individually, as well as combined across normal forms. See Appendix G for definitions of the metrics.

\begin{table}[t]
      \centering
        \caption{Detailed subsumption prediction results on the GALEN ontology.}
        \label{tab:galen}
        \resizebox{\columnwidth}{!}{%
        \begin{tabular}{@{}llrrrrrrr@{}}
        \toprule
                       & Model    & H@1        & H@10       & H@100      & Med           & MRR           & MR            & AUC           \\ \midrule
        \multirow{5}{*}{\rotatebox{90}{NF1}}      & ELEm     & 0.01          & 0.16          & 0.40          & 430           & 0.06          & 3568          & 0.85          \\
        & \emelpp     & 0.02 & 0.16          & 0.37          & 632           & 0.06 & 3765          & 0.84          \\
        & BoxEL & 0.00 & 0.00 & 0.05 & 3715 & 0.00 & 5727 & 0.75\\
        & ELBE     & \textbf{0.03} & 0.24          & 0.47          & 138           & 0.10 & \textbf{2444}          & \textbf{0.89}          \\
        & \boxsqel & \textbf{0.03}          & \textbf{0.30} & \textbf{0.51} & \textbf{91}   & \textbf{0.12}         & 2632 & \textbf{0.89} \\ \midrule
        \multirow{5}{*}{\rotatebox{90}{NF2}}      & ELEm     & 0.01          & 0.07          & 0.17          & 5106          & 0.03          & 7432          & 0.68          \\
        & \emelpp & 0.01 & 0.07 & 0.15 & 5750 & 0.03 & 7767 & 0.66\\
        & BoxEL & 0.00 & 0.00 & 0.00 & 11358 & 0.00 & 11605 & 0.50\\
        & ELBE     & 0.03          & 0.06          & 0.11          & 6476          & 0.04          & 8068          & 0.65          \\
        & \boxsqel & \textbf{0.06} & \textbf{0.15} & \textbf{0.28} & \textbf{2149} & \textbf{0.09} & \textbf{6265} & \textbf{0.73} \\ \midrule
        \multirow{5}{*}{\rotatebox{90}{NF3}}      & ELEm     & 0.02          & 0.14          & 0.28          & 1479          & 0.05          & 4831          & 0.79          \\
        & \emelpp & 0.02 & 0.11 & 0.22 & 2240 & 0.05 & 5348 & 0.77\\
        & BoxEL & 0.00 & 0.02 & 0.08 & 7239 & 0.01 & 8615 & 0.63\\
        & ELBE     & 0.03          & 0.14          & 0.25          & 2154          & 0.07          & 5072          & 0.78          \\
        & \boxsqel & \textbf{0.08} & \textbf{0.19} & \textbf{0.32} & \textbf{635} & \textbf{0.12} & \textbf{3798} & \textbf{0.84} \\ \midrule
        \multirow{5}{*}{\rotatebox{90}{NF4}}      & ELEm     & 0.00          & 0.05          & 0.18          & 3855          & 0.02 & 6793 & 0.71 \\
        & \emelpp & 0.00 & 0.04 & 0.12 & 4458 & 0.01 & 7020 & 0.70\\
        & BoxEL & 0.00 & \textbf{0.15} & \textbf{0.69} & \textbf{47} & \textbf{0.04} & \textbf{2667} & \textbf{0.89}\\
        & ELBE     & 0.00          & 0.03          & 0.07          & 7563          & 0.01          & 8884          & 0.62          \\
        & \boxsqel & 0.00          & 0.06 & 0.15 & 4364 & 0.02 & 7266          & 0.69 \\ \bottomrule
    \end{tabular}%
    }
\end{table} 
\begin{table}[t]
    \centering
    \caption{Detailed subsumption prediction results on the GO ontology.}
    \label{tab:go_detailed}
        \resizebox{\columnwidth}{!}{
        \begin{tabular}{@{}llrrrrrrr@{}}
        \toprule
                       & Model    & H@1 & H@10       & H@100      & Med  & MRR  & MR            & AUC           \\ \midrule
        \multirow{5}{*}{\rotatebox{90}{NF1}}      & ELEm     & 0.01   & 0.13          & 0.35          & 590  & 0.05 & 6433          & 0.86          \\
        & \emelpp & 0.01 & 0.12 & 0.30 & 1023 & 0.05 & 6709 & 0.85\\
        & BoxEL     & 0.00 & 0.01 & 0.05 & 5374 & 0.00 & 13413 & 0.71        \\
        & ELBE     & 0.01   & 0.10          & 0.24          & 1156 & 0.04 & 5657          & 0.88          \\
        & \boxsqel & \textbf{0.03} & \textbf{0.17} & \textbf{0.58} & \textbf{58} & \textbf{0.08} & \textbf{2686} & \textbf{0.94} \\ \midrule
        \multirow{5}{*}{\rotatebox{90}{NF2}}      & ELEm     & 0.12   & 0.49          & 0.63          & 11   & 0.24 & 4508          & 0.90          \\
        & \emelpp & 0.11 & 0.44 & 0.55 & 23 & 0.21 & 5169 & 0.89\\
        & BoxEL & 0.00 & 0.00 & 0.00 & 22882 & 0.00 & 23007 & 0.50\\
        & ELBE     & 0.01   & 0.05          & 0.09          & 6456 & 0.02 & 9421          & 0.80          \\
        & \boxsqel & \textbf{0.18} & \textbf{0.58} & \textbf{0.75} & \textbf{6}  & \textbf{0.31} & \textbf{2104} & \textbf{0.95} \\ \midrule
        \multirow{5}{*}{\rotatebox{90}{NF3}} & ELEm     & \textbf{0.06} & \textbf{0.40} & 0.52 & \textbf{54} & \textbf{0.15} & 6292          & 0.86          \\
        & \emelpp & 0.05 & 0.39 & 0.48 & 210 & \textbf{0.15} & 7788 & 0.83\\
        & BoxEL & 0.00 & 0.00 & 0.00 & 17027 & 0.00 & 18947 & 0.59\\
        & ELBE     & 0.02   & 0.15          & 0.30          & 959  & 0.07 & 7131          & 0.84          \\
        & \boxsqel & 0.00   & 0.18          & \textbf{0.53}          & 79   & 0.05 & \textbf{5042} & \textbf{0.89} \\ \midrule
        \multirow{5}{*}{\rotatebox{90}{NF4}} & ELEm     & 0.01 & 0.49 & 0.60          & \textbf{12} & 0.12 & 6272          & 0.86          \\
        & \emelpp & 0.01 & 0.49 & 0.58 & \textbf{12} & 0.13 & 6442 & 0.86\\
        & BoxEL & \textbf{0.09} & \textbf{0.54} & 0.54 & 2215 & \textbf{0.28} & 9673 & 0.79\\
        & ELBE     & 0.00   & 0.07          & 0.12          & 9049 & 0.02 & 12868         & 0.72          \\
        & \boxsqel & 0.00   & 0.37          & \textbf{0.64} & 20   & 0.08 & \textbf{4989} & \textbf{0.89} \\ \bottomrule
    \end{tabular}%
        }%
\end{table}

\paragraph{Experimental protocol.}
The embeddings are optimized with Adam \citep{kingma2015adam} for a maximum of 10,000 epochs. All hyperparameters are described in detail in Appendix H\@. We evaluate the models on a fraction of the validation set every $100$ epochs and choose the embeddings that achieve the best performance for final evaluation on the testing set. The results we report are averages across 5 different random seeds.

\paragraph{Results.}
The results on all the testing axioms (combined across all normal forms) are reported in \cref{tab:combined}. For detailed results on testing axioms of each normal form, see \cref{tab:galen,tab:go_detailed,tab:anatomy_detailed}.
We first observe that our model \boxsqel\ consistently outperforms all the baselines on all datasets, often with significant performance gains. For example, the median rank (MR) of \boxsqel\ is around 60\% lower than the second best-performing method on GALEN, more than 80\% lower on GO, and more than 40\% lower on Anatomy.
Among the baseline methods, results are similar; interestingly, ELEm generally performs best, in contrast to previous benchmarks.

From the detailed results, we observe that the novel role representation of \boxsqel\ not only generally improves prediction performance for NF3 axioms, which contain roles, but also for NF1 and NF2 axioms. This can be explained by the fact that the different normal forms are used to optimize the \textit{same} embeddings; i.e., if \boxsqel\ can better represent an axiom of the form $C\sqsubseteq\exists r. D$, it will learn better embeddings for $C$ and $D$, therefore also improving prediction quality for NF1 and NF2 axioms involving $C$ and/or $D$. There is no clear trend which axioms are the easiest to predict; on GALEN, the models generally perform better on NF1 axioms involving only atomic concepts, whereas on GO and Anatomy they perform similarly well on axioms involving complex concepts.

\begin{table}[t]
    \centering
    \caption{Detailed subsumption prediction results on the Anatomy ontology.}
    \label{tab:anatomy_detailed}
        \resizebox{\columnwidth}{!}{
            \begin{tabular}{@{}llrrrrrrr@{}}
        \toprule
        & Model    & H@1        & H@10       & H@100      & Med          & MRR           & MR            & AUC           \\ \midrule
        \multirow{5}{*}{\rotatebox{90}{NF1}}      & ELEm     & 0.07 & 0.30 & 0.57          & 43           & 0.14 & 9059          & 0.91          \\
        & \emelpp & \textbf{0.08} & 0.29 & 0.53 & 60 & 0.14 & 10414 & 0.90\\\
        & BoxEL & 0.01 & 0.05 & 0.16 & 1828 & 0.03 & 9597 & 0.91\\
        & ELBE     & 0.05          & 0.24          & 0.55          & 68           & 0.11          & 5177          & 0.95          \\
        & \boxsqel & 0.07          & \textbf{0.34}          & \textbf{0.65} & \textbf{27}  & \textbf{0.15}          & \textbf{2894} & \textbf{0.97} \\ \midrule
        \multirow{5}{*}{\rotatebox{90}{NF2}}      & ELEm     & 0.03          & 0.18          & 0.42          & 394          & 0.08          & 11592         & 0.89          \\
        & \emelpp & 0.03 & 0.18 & 0.35 & 1291 & 0.08 & 15759 & 0.85\\
        & BoxEL & 0.00 & 0.00 & 0.00 & 17607 & 0.00 & 26872 & 0.75\\
        & ELBE     & 0.02          & 0.11          & 0.26          & 1394         & 0.05          & 4885          & 0.96          \\
        & \boxsqel & \textbf{0.16} & \textbf{0.41} & \textbf{0.64} & \textbf{26}  & \textbf{0.24} & \textbf{1928} & \textbf{0.98} \\ \midrule
        \multirow{5}{*}{\rotatebox{90}{NF3}}      & ELEm     & 0.12          & 0.47          & 0.69          & 13           & 0.23          & 4686          & 0.96          \\
        & \emelpp & 0.13 & 0.42 & 0.60 & 23 & 0.23 & 7097 & 0.93\\
        & BoxEL & 0.04 & 0.17 & 0.36 & 567 & 0.08 & 11095 & 0.90\\
        & ELBE     & 0.04          & 0.44          & 0.70          & 16           & 0.18          & 5408          & 0.95          \\
        & \boxsqel & \textbf{0.21} & \textbf{0.56} & \textbf{0.75} & \textbf{7}   & \textbf{0.33} & \textbf{2466} & \textbf{0.98} \\ \midrule
        \multirow{5}{*}{\rotatebox{90}{NF4}}      & ELEm     & 0.00          & 0.03          & \textbf{0.23}          & \textbf{813}          & \textbf{0.01}          & 10230         & 0.91          \\
        & \emelpp & 0.00 & 0.02 & 0.17 & 1470 & \textbf{0.01} & 10951 & 0.90\\
        & BoxEL & 0.00 & 0.00 & 0.00 & 38942 & 0.00 & 41283 & 0.61\\
        & ELBE     & 0.00          & 0.02          & 0.06          & 6261         & \textbf{0.01}          & 15187         & 0.86          \\
        & \boxsqel & 0.00          & \textbf{0.05} & 0.14 & 3065 & \textbf{0.01} & \textbf{8366} & \textbf{0.92} \\ \bottomrule
    \end{tabular}%
        }%
\end{table}

\subsection{Link Prediction}
We next evaluate our model on the task of link prediction, i.e., predicting role assertions of the form $r(a,b)$, which is implemented by predicting subsumptions of the form $\{a\}\sqsubseteq\exists r.\{b\}$.

\paragraph{Datasets.}
We consider a real-world protein-protein interaction (PPI) prediction task introduced in~\citep{kulmanov2019embeddings}. They provide two ontologies for human and yeast organisms, constructed by combining the STRING database of PPIs~\citep{szklarczyk2021string} with the Gene Ontology~\citep{ashburner2000gene}. The proteins and their interactions recorded in STRING constitute the ABox, while GO acts as the TBox, and additional axioms modeling the association of proteins with their biological functions are added to the ontology. The task is to predict missing subsumptions of the form $\{P_1\}\sqsubseteq\exists\asf{interacts}.\{P_2\}$, where $P_1$ and $P_2$ represent two proteins.

\begin{table*}[t]
    \centering
    \caption{PPI prediction results on the yeast and human datasets. Columns annotated with~(F) contain filtered metrics, other columns contain raw metrics. The results for BoxEL are from~\protect\citep{xiong2022box}; all other baseline results are from~\protect\citep{peng2022description}.}
    \label{tab:ppi}
    \begin{tabular}{@{}llrrrrrrrr@{}}
        \toprule
         &
        Model &
        H@10 &
        H@10 (F) &
        H@100 &
        H@100 (F) &
        MR &
        MR (F) &
        AUC &
        AUC (F) \\ \midrule
        \multirow{5}{*}{\rotatebox{90}{Yeast}} &
        ELEm &
        0.10 &
        0.23 &
        0.50 &
        0.75 &
        247 &
        187 &
        0.96 &
        0.97 \\
        &
        \emelpp &
        0.08 &
        0.17 &
        0.48 &
        0.65 &
        336 &
        291 &
        0.94 &
        0.95 \\
        &
        BoxEL &
        0.09 &
        0.20 &
        0.52 &
        0.73 &
        423 &
        379 &
        0.93 &
        0.94 \\
        &
        ELBE &
        \textbf{0.11} &
        0.26 &
        0.57 &
        0.77 &
        201 &
        154 &
        0.96 &
        0.97 \\
        &
        \boxsqel &
        \textbf{0.11} &
        \textbf{0.33} &
        \textbf{0.64} &
        \textbf{0.87} &
        \textbf{168} &
        \textbf{118} &
        \textbf{0.97} &
        \textbf{0.98} \\ \midrule
        \multirow{5}{*}{\rotatebox{90}{Human}} &
        ELEm &
        \textbf{0.09} &
        0.22 &
        0.43 &
        0.70 &
        658 &
        572 &
        0.96 &
        0.96 \\
        &
        \emelpp &
        0.04 &
        0.13 &
        0.38 &
        0.56 &
        772 &
        700 &
        0.95 &
        0.95 \\
        &
        BoxEL &
        0.07 &
        0.10 &
        0.42 &
        0.63 &
        1574 &
        1530 &
        0.93 &
        0.93 \\
        &
        ELBE &
        \textbf{0.09} &
        0.22 &
        0.49 &
        0.72 &
        434 &
        362 &
        0.97 &
        \textbf{0.98} \\
        &
        \boxsqel &
        \textbf{0.09} &
        \textbf{0.28} &
        \textbf{0.55} &
        \textbf{0.83} &
        \textbf{343} &
        \textbf{269} &
        \textbf{0.98} &
        \textbf{0.98} \\ \bottomrule
    \end{tabular}%
\end{table*}

\paragraph{Baselines.}
We also consider ELEm~\citep{kulmanov2019embeddings}, \emelpp~\citep{mondala2021emel++}, BoxEL~\citep{xiong2022box}, and ELBE~\citep{peng2022description} for the baselines as in general subsumption prediction, and report the relevant best results from their original papers.

\paragraph{Evaluation and experimental protocol.}
In order to evaluate our method, we use the 80\%/10\%/10\% training, validation, and testing split of the PPI data provided by~\citep{kulmanov2019embeddings}. We compute the same ranking-based metrics as in subsumption prediction, both in the standard and filtered fashion, in which any true candidate predictions except for the target axiom to predict are first removed from the set of all candidate predictions before computing the ranks. The experimental protocol is the same as in subsumption prediction.

\begin{table}[t]
    \centering
    \caption{Deductive reasoning results on GALEN, GO, and Anatomy.}
    \label{tab:reasoning}
    \resizebox{\columnwidth}{!}{
    \begin{tabular}{@{}llrrrrrrr@{}}
        \toprule
                          & Model    & H@1 & H@10       & H@100      & Med          & MRR           & MR            & AUC           \\ \midrule
        \multirow{5}{*}{\rotatebox[origin=c]{90}{GALEN}}   & ELEm     & 0.00   & 0.04          & 0.20          & 1807         & 0.01          & 4405          & 0.81          \\
        & \emelpp & 0.00 & 0.04 & 0.18 & 2049 & 0.01 & 4634 & 0.81\\
        & BoxEL & 0.00 & 0.00 & 0.01 & 6906 & 0.00 & 7925 & 0.67\\
        & ELBE     & 0.00   & 0.06          & 0.16          & 1785         & 0.02          & 3974          & 0.84          \\
        & \boxsqel & \textbf{0.01} & \textbf{0.09} & \textbf{0.24} & \textbf{1003} & \textbf{0.03} & \textbf{2833} & \textbf{0.88} \\ \midrule
        \multirow{5}{*}{\rotatebox[origin=c]{90}{GO}}      & ELEm     & 0.00   & 0.04          & 0.22          & 1629         & 0.02          & 7377          & 0.84          \\
        & \emelpp & 0.00 & 0.04 & 0.19 & 1346 & 0.01 & 6557 & 0.86\\
        & BoxEL & 0.00 & 0.00 & 0.13 & 1085 & 0.00 & 5359 & 0.88\\
        & ELBE     & 0.00   & 0.06          & 0.21          & 935          & 0.02          & 3846          & 0.92          \\
        & \boxsqel & 0.00   & \textbf{0.08} & \textbf{0.49} & \textbf{107} & \textbf{0.04} & \textbf{1689} & \textbf{0.96} \\ \midrule
        \multirow{5}{*}{\rotatebox[origin=c]{90}{Anatomy}} & ELEm     & 0.00   & 0.07          & 0.28          & 901          & 0.02          & 7958          & 0.93          \\
        & \emelpp & 0.00 & 0.07 & 0.26 & 1576 & 0.02 & 10976 & 0.90\\
        & BoxEL & \textbf{0.01} & \textbf{0.10} & 0.24 & 838 & \textbf{0.04} & 9156 & 0.92\\
        & ELBE     & 0.00   & 0.08          & 0.32          & 336          & 0.03          & 2312          & 0.98          \\
        & \boxsqel & \textbf{0.01}   & 0.09 & \textbf{0.44} & \textbf{152} & \textbf{0.04} & \textbf{1599} & \textbf{0.99} \\ \bottomrule
    \end{tabular}
    }
\end{table}

\paragraph{Results.}
\cref{tab:ppi} lists the results of \boxsqel\ and the baseline methods on the yeast and human PPI prediction datasets. \boxsqel\ outperforms all the baselines, with significant performance gains for both datasets on most of the metrics including filtered hits and mean rank. The AUC and AUC (F) values are all very close to the maximum value $1.0$, but \boxsqel\ still improves the state-of-the-art on Yeast, and ties ELBE on Human.
All these results demonstrate the effectiveness of \boxsqel\ in role assertion prediction for ontologies with an ABox. 
The comparatively stronger results than ELBE and BoxEL, which share the same concept representation as our model, once again highlight the positive impact of our novel approach of representing the semantics of roles with boxes and bump vectors.

\subsection{Approximating Deductive Reasoning}
We finally investigate how well our model can approximate \textit{deductive reasoning}, i.e., infer subsumptions that are logical consequences of the axioms in the ontology.

\paragraph{Experimental setup.}
We again consider the GALEN, GO, and Anatomy ontologies. Instead of splitting the axioms into separate training, validation, and testing sets, we now train the models on the entire ontology using all of its asserted axioms. For evaluation, we use an \el\ reasoner to compute the complete set of NF1 axioms (i.e., atomic subsumptions) that are logically implied by, but not explicitly asserted in the given ontology. 
We split off 10\% of these implied NF1 axioms for the validation set and keep the remainder as the testing set.
We report the results of \boxsqel\ and the same baseline methods considered in subsumption prediction. The evaluation and experimental protocol is also the same as in subsumption prediction.\footnote{Deductive reasoning with \el\ embeddings has been previously considered in~\citep{mondala2021emel++}. However, we find that there is significant leakage (overlap) between their testing and training sets. We therefore do not adopt their reported results, but instead reproduce the results of the baseline models.}

\paragraph{Results.}
\cref{tab:reasoning} lists the results of approximating deductive reasoning. The baselines perform similarly, with ELBE achieving slightly stronger results than the others on GO and Anatomy. \boxsqel\ outperforms the baselines on almost all metrics across the three ontologies, with significant performance gains especially for Hits@$100$, median rank, and mean rank.
This indicates that \boxsqel\ is able to preserve more of the logical structure than the other embedding methods.

\paragraph{Deductive vs inductive reasoning.}
Comparing the results in \cref{tab:reasoning,tab:combined}, we observe that the embedding models generally perform better in subsumption prediction than in approximating deductive reasoning. To see why, note that the learned embeddings are used to make purely statistical predictions about missing axioms. The soundness of our method guarantees that these predictions align with the semantics of the ontology. However, we do not explicitly perform logical inference steps in the embedding space, as would be required to derive logical inferences similar to a deductive reasoning algorithm. We illustrate this difference with a concrete example in Appendix I\@. While embedding methods can thus be useful to approximate deductive reasoning, the two approaches are best used in conjunction in order to combine formal derivations with inductive and probable knowledge.

\section{Conclusion}
We developed \boxsqel, a novel OWL ontology embedding method that adopts box-based representations for both concepts and roles. This representation is able to model complex logical constructs from \el\ and overcomes the limitations of previous approaches in representing roles and role inclusion axioms.
We formally proved that our method is sound, i.e., correctly represents the semantics of \el, and performed an extensive empirical evaluation, achieving state-of-the-art results in concept subsumption prediction, role assertion prediction, and approximating deductive reasoning.


\begin{acks}
    This work was supported by Samsung Research UK (SRUK) and the EPSRC projects ConCur (EP/V050869/1), OASIS (EP/S032347/1), and UK FIRES (EP/S019111/1).
\end{acks}

\bibliographystyle{ACM-Reference-Format}
\balance
\bibliography{references}


\begin{thebibliography}{46}


\ifx \showCODEN    \undefined \def \showCODEN     #1{\unskip}     \fi
\ifx \showDOI      \undefined \def \showDOI       #1{#1}\fi
\ifx \showISBNx    \undefined \def \showISBNx     #1{\unskip}     \fi
\ifx \showISBNxiii \undefined \def \showISBNxiii  #1{\unskip}     \fi
\ifx \showISSN     \undefined \def \showISSN      #1{\unskip}     \fi
\ifx \showLCCN     \undefined \def \showLCCN      #1{\unskip}     \fi
\ifx \shownote     \undefined \def \shownote      #1{#1}          \fi
\ifx \showarticletitle \undefined \def \showarticletitle #1{#1}   \fi
\ifx \showURL      \undefined \def \showURL       {\relax}        \fi
\providecommand\bibfield[2]{#2}
\providecommand\bibinfo[2]{#2}
\providecommand\natexlab[1]{#1}
\providecommand\showeprint[2][]{arXiv:#2}

\bibitem[Abboud et~al\mbox{.}(2020)]%
        {abboud2020boxe}
\bibfield{author}{\bibinfo{person}{Ralph Abboud}, \bibinfo{person}{{\.I}smail~{\.I}lkan Ceylan}, \bibinfo{person}{Thomas Lukasiewicz}, {and} \bibinfo{person}{Tommaso Salvatori}.} \bibinfo{year}{2020}\natexlab{}.
\newblock \showarticletitle{{{BoxE}}: A Box Embedding Model for Knowledge Base Completion}. In \bibinfo{booktitle}{\emph{Advances in {{Neural Information Processing Systems}}}}, Vol.~\bibinfo{volume}{33}. \bibinfo{pages}{9649--9661}.
\newblock


\bibitem[Ashburner et~al\mbox{.}(2000)]%
        {ashburner2000gene}
\bibfield{author}{\bibinfo{person}{Michael Ashburner}, \bibinfo{person}{Catherine~A Ball}, \bibinfo{person}{Judith~A Blake}, \bibinfo{person}{David Botstein}, \bibinfo{person}{Heather Butler}, \bibinfo{person}{J~Michael Cherry}, \bibinfo{person}{Allan~P Davis}, \bibinfo{person}{Kara Dolinski}, \bibinfo{person}{Selina~S Dwight}, \bibinfo{person}{Janan~T Eppig}, {et~al\mbox{.}}} \bibinfo{year}{2000}\natexlab{}.
\newblock \showarticletitle{Gene ontology: tool for the unification of biology}.
\newblock \bibinfo{journal}{\emph{Nature genetics}} \bibinfo{volume}{25}, \bibinfo{number}{1} (\bibinfo{year}{2000}), \bibinfo{pages}{25--29}.
\newblock


\bibitem[Baader et~al\mbox{.}(2005a)]%
        {baader2005pushing}
\bibfield{author}{\bibinfo{person}{Franz Baader}, \bibinfo{person}{Sebastian Brandt}, {and} \bibinfo{person}{Carsten Lutz}.} \bibinfo{year}{2005}\natexlab{a}.
\newblock \showarticletitle{Pushing the {EL} envelope}. In \bibinfo{booktitle}{\emph{Proceedings of the Nineteenth International Joint Conference on Artificial Intelligence}}, Vol.~\bibinfo{volume}{5}. \bibinfo{pages}{364--369}.
\newblock


\bibitem[Baader et~al\mbox{.}(2005b)]%
        {baader2005Pushinga}
\bibfield{author}{\bibinfo{person}{Franz Baader}, \bibinfo{person}{Sebastian Brandt}, {and} \bibinfo{person}{Carsten Lutz}.} \bibinfo{year}{2005}\natexlab{b}.
\newblock \bibinfo{booktitle}{\emph{Pushing the {{EL}} Envelope}}.
\newblock \bibinfo{type}{{{LTCS-Report}}} LTCS-05-01. \bibinfo{institution}{{Institute for Theoretical Computer Science, TU Dresden}}.
\newblock
\urldef\tempurl%
\url{http://lat.inf.tu-dresden.de/research/reports.html}
\showURL{%
\tempurl}


\bibitem[Baader et~al\mbox{.}(2005c)]%
        {baader2005description}
\bibfield{author}{\bibinfo{person}{Franz Baader}, \bibinfo{person}{Ian Horrocks}, {and} \bibinfo{person}{Ulrike Sattler}.} \bibinfo{year}{2005}\natexlab{c}.
\newblock \showarticletitle{Description logics as ontology languages for the semantic web}.
\newblock In \bibinfo{booktitle}{\emph{Mechanizing mathematical reasoning}}. \bibinfo{publisher}{Springer}, \bibinfo{pages}{228--248}.
\newblock


\bibitem[Birkhoff and Von~Neumann(1936)]%
        {birkhoff1936Logic}
\bibfield{author}{\bibinfo{person}{Garrett Birkhoff} {and} \bibinfo{person}{John Von~Neumann}.} \bibinfo{year}{1936}\natexlab{}.
\newblock \showarticletitle{The {{Logic}} of {{Quantum Mechanics}}}.
\newblock \bibinfo{journal}{\emph{Annals of Mathematics}} \bibinfo{volume}{37}, \bibinfo{number}{4} (\bibinfo{year}{1936}), \bibinfo{pages}{823--843}.
\newblock
\showISSN{0003-486X}
\showeprint[jstor]{1968621}


\bibitem[Bordes et~al\mbox{.}(2013)]%
        {bordes2013translating}
\bibfield{author}{\bibinfo{person}{Antoine Bordes}, \bibinfo{person}{Nicolas Usunier}, \bibinfo{person}{Alberto {Garcia-Duran}}, \bibinfo{person}{Jason Weston}, {and} \bibinfo{person}{Oksana Yakhnenko}.} \bibinfo{year}{2013}\natexlab{}.
\newblock \showarticletitle{Translating Embeddings for Modeling Multi-relational Data}. In \bibinfo{booktitle}{\emph{Advances in {{Neural Information Processing Systems}}}}, Vol.~\bibinfo{volume}{26}.
\newblock


\bibitem[Chen et~al\mbox{.}(2023)]%
        {chen2023contextual}
\bibfield{author}{\bibinfo{person}{Jiaoyan Chen}, \bibinfo{person}{Yuan He}, \bibinfo{person}{Yuxia Geng}, \bibinfo{person}{Ernesto Jim{\'e}nez-Ruiz}, \bibinfo{person}{Hang Dong}, {and} \bibinfo{person}{Ian Horrocks}.} \bibinfo{year}{2023}\natexlab{}.
\newblock \showarticletitle{Contextual semantic embeddings for ontology subsumption prediction}.
\newblock \bibinfo{journal}{\emph{World Wide Web}} (\bibinfo{year}{2023}), \bibinfo{pages}{1--23}.
\newblock


\bibitem[Chen et~al\mbox{.}(2021)]%
        {chen2021owl2vec}
\bibfield{author}{\bibinfo{person}{Jiaoyan Chen}, \bibinfo{person}{Pan Hu}, \bibinfo{person}{Ernesto Jimenez-Ruiz}, \bibinfo{person}{Ole~Magnus Holter}, \bibinfo{person}{Denvar Antonyrajah}, {and} \bibinfo{person}{Ian Horrocks}.} \bibinfo{year}{2021}\natexlab{}.
\newblock \showarticletitle{{OWL2Vec*}: Embedding of {OWL} ontologies}.
\newblock \bibinfo{journal}{\emph{Machine Learning}} \bibinfo{volume}{110}, \bibinfo{number}{7} (\bibinfo{year}{2021}), \bibinfo{pages}{1813--1845}.
\newblock


\bibitem[Dooley et~al\mbox{.}(2018)]%
        {dooley2018foodon}
\bibfield{author}{\bibinfo{person}{Damion~M. Dooley}, \bibinfo{person}{Emma~J. Griffiths}, \bibinfo{person}{Gurinder~S. Gosal}, \bibinfo{person}{Pier~L. Buttigieg}, \bibinfo{person}{Robert Hoehndorf}, \bibinfo{person}{Matthew~C. Lange}, \bibinfo{person}{Lynn~M. Schriml}, \bibinfo{person}{Fiona S.~L. Brinkman}, {and} \bibinfo{person}{William W.~L. Hsiao}.} \bibinfo{year}{2018}\natexlab{}.
\newblock \showarticletitle{{{FoodOn}}: A Harmonized Food Ontology to Increase Global Food Traceability, Quality Control and Data Integration}.
\newblock \bibinfo{journal}{\emph{npj Science of Food}} \bibinfo{volume}{2}, \bibinfo{number}{1} (\bibinfo{date}{Dec.} \bibinfo{year}{2018}), \bibinfo{pages}{23}.
\newblock
\showISSN{2396-8370}


\bibitem[Garg et~al\mbox{.}(2019)]%
        {garg2019quantum}
\bibfield{author}{\bibinfo{person}{Dinesh Garg}, \bibinfo{person}{Shajith Ikbal}, \bibinfo{person}{Santosh~K Srivastava}, \bibinfo{person}{Harit Vishwakarma}, \bibinfo{person}{Hima Karanam}, {and} \bibinfo{person}{L~Venkata Subramaniam}.} \bibinfo{year}{2019}\natexlab{}.
\newblock \showarticletitle{Quantum embedding of knowledge for reasoning}.
\newblock \bibinfo{journal}{\emph{Advances in Neural Information Processing Systems}}  \bibinfo{volume}{32} (\bibinfo{year}{2019}).
\newblock


\bibitem[Glimm et~al\mbox{.}(2014)]%
        {glimm2014HermiT}
\bibfield{author}{\bibinfo{person}{Birte Glimm}, \bibinfo{person}{Ian Horrocks}, \bibinfo{person}{Boris Motik}, \bibinfo{person}{Giorgos Stoilos}, {and} \bibinfo{person}{Zhe Wang}.} \bibinfo{year}{2014}\natexlab{}.
\newblock \showarticletitle{{{HermiT}}: {{An OWL}} 2 {{Reasoner}}}.
\newblock \bibinfo{journal}{\emph{Journal of Automated Reasoning}} \bibinfo{volume}{53}, \bibinfo{number}{3} (\bibinfo{year}{2014}), \bibinfo{pages}{245--269}.
\newblock
\showISSN{1573-0670}


\bibitem[Grau et~al\mbox{.}(2008)]%
        {grau2008OWL}
\bibfield{author}{\bibinfo{person}{Bernardo~Cuenca Grau}, \bibinfo{person}{Ian Horrocks}, \bibinfo{person}{Boris Motik}, \bibinfo{person}{Bijan Parsia}, \bibinfo{person}{Peter {Patel-Schneider}}, {and} \bibinfo{person}{Ulrike Sattler}.} \bibinfo{year}{2008}\natexlab{}.
\newblock \showarticletitle{{{OWL}} 2: {{The}} next Step for {{OWL}}}.
\newblock \bibinfo{journal}{\emph{Journal of Web Semantics}} \bibinfo{volume}{6}, \bibinfo{number}{4} (\bibinfo{year}{2008}), \bibinfo{pages}{309--322}.
\newblock
\showISSN{1570-8268}


\bibitem[Guo et~al\mbox{.}(2016)]%
        {guo2016Jointly}
\bibfield{author}{\bibinfo{person}{Shu Guo}, \bibinfo{person}{Quan Wang}, \bibinfo{person}{Lihong Wang}, \bibinfo{person}{Bin Wang}, {and} \bibinfo{person}{Li Guo}.} \bibinfo{year}{2016}\natexlab{}.
\newblock \showarticletitle{Jointly {{Embedding Knowledge Graphs}} and {{Logical Rules}}}. In \bibinfo{booktitle}{\emph{Proceedings of the 2016 {{Conference}} on {{Empirical Methods}} in {{Natural}} {{Language Processing}}}}. \bibinfo{pages}{192--202}.
\newblock


\bibitem[Hao et~al\mbox{.}(2019)]%
        {hao2019universal}
\bibfield{author}{\bibinfo{person}{Junheng Hao}, \bibinfo{person}{Muhao Chen}, \bibinfo{person}{Wenchao Yu}, \bibinfo{person}{Yizhou Sun}, {and} \bibinfo{person}{Wei Wang}.} \bibinfo{year}{2019}\natexlab{}.
\newblock \showarticletitle{Universal representation learning of knowledge bases by jointly embedding instances and ontological concepts}. In \bibinfo{booktitle}{\emph{Proceedings of the 25th ACM SIGKDD International Conference on Knowledge Discovery \& Data Mining}}. \bibinfo{pages}{1709--1719}.
\newblock


\bibitem[H{\'{e}}der(2014)]%
        {heder2014semantic}
\bibfield{author}{\bibinfo{person}{Mih{\'{a}}ly H{\'{e}}der}.} \bibinfo{year}{2014}\natexlab{}.
\newblock \showarticletitle{\emph{Semantic Web for the Working Ontologist, Second edition: Effective modeling in {RDFS} and OWL}}.
\newblock \bibinfo{journal}{\emph{Knowl. Eng. Rev.}} \bibinfo{volume}{29}, \bibinfo{number}{3} (\bibinfo{year}{2014}), \bibinfo{pages}{404--405}.
\newblock


\bibitem[Hoehndorf et~al\mbox{.}(2011)]%
        {hoehndorf2011common}
\bibfield{author}{\bibinfo{person}{Robert Hoehndorf}, \bibinfo{person}{Michel Dumontier}, \bibinfo{person}{Anika Oellrich}, \bibinfo{person}{Sarala Wimalaratne}, \bibinfo{person}{Dietrich {Rebholz-Schuhmann}}, \bibinfo{person}{Paul Schofield}, {and} \bibinfo{person}{Georgios~V. Gkoutos}.} \bibinfo{year}{2011}\natexlab{}.
\newblock \showarticletitle{A Common Layer of Interoperability for Biomedical Ontologies Based on {{OWL EL}}}.
\newblock \bibinfo{journal}{\emph{Bioinformatics}} \bibinfo{volume}{27}, \bibinfo{number}{7} (\bibinfo{date}{April} \bibinfo{year}{2011}), \bibinfo{pages}{1001--1008}.
\newblock
\showISSN{1367-4803}


\bibitem[Hogan et~al\mbox{.}(2021)]%
        {hogan2021knowledge}
\bibfield{author}{\bibinfo{person}{Aidan Hogan}, \bibinfo{person}{Eva Blomqvist}, \bibinfo{person}{Michael Cochez}, \bibinfo{person}{Claudia d’Amato}, \bibinfo{person}{Gerard~de Melo}, \bibinfo{person}{Claudio Gutierrez}, \bibinfo{person}{Sabrina Kirrane}, \bibinfo{person}{Jos{\'e} Emilio~Labra Gayo}, \bibinfo{person}{Roberto Navigli}, \bibinfo{person}{Sebastian Neumaier}, {et~al\mbox{.}}} \bibinfo{year}{2021}\natexlab{}.
\newblock \showarticletitle{Knowledge graphs}.
\newblock \bibinfo{journal}{\emph{ACM Computing Surveys (CSUR)}} \bibinfo{volume}{54}, \bibinfo{number}{4} (\bibinfo{year}{2021}), \bibinfo{pages}{1--37}.
\newblock


\bibitem[Kazakov et~al\mbox{.}(2014)]%
        {kazakov2014Incredible}
\bibfield{author}{\bibinfo{person}{Yevgeny Kazakov}, \bibinfo{person}{Markus Kr{\"o}tzsch}, {and} \bibinfo{person}{Franti{\v s}ek Siman{\v c}{\'i}k}.} \bibinfo{year}{2014}\natexlab{}.
\newblock \showarticletitle{The {{Incredible ELK}}}.
\newblock \bibinfo{journal}{\emph{Journal of Automated Reasoning}} \bibinfo{volume}{53}, \bibinfo{number}{1} (\bibinfo{year}{2014}), \bibinfo{pages}{1--61}.
\newblock


\bibitem[Kingma and Ba(2015)]%
        {kingma2015adam}
\bibfield{author}{\bibinfo{person}{Diederik~P. Kingma} {and} \bibinfo{person}{Jimmy Ba}.} \bibinfo{year}{2015}\natexlab{}.
\newblock \showarticletitle{Adam: {{A}} Method for Stochastic Optimization}. In \bibinfo{booktitle}{\emph{3rd International Conference on Learning Representations}}.
\newblock


\bibitem[Kr{\"o}tzsch(2012)]%
        {krotzsch2012OWL}
\bibfield{author}{\bibinfo{person}{Markus Kr{\"o}tzsch}.} \bibinfo{year}{2012}\natexlab{}.
\newblock \showarticletitle{{{OWL}} 2 {{Profiles}}: {{An Introduction}} to {{Lightweight Ontology Languages}}}.
\newblock In \bibinfo{booktitle}{\emph{Reasoning {{Web}}. {{Semantic Technologies}} for {{Advanced Query Answering}}. {{Reasoning Web}} 2012}}. Vol.~\bibinfo{volume}{7487}. \bibinfo{pages}{112--183}.
\newblock


\bibitem[Kulmanov et~al\mbox{.}(2019)]%
        {kulmanov2019embeddings}
\bibfield{author}{\bibinfo{person}{Maxat Kulmanov}, \bibinfo{person}{Wang {Liu-Wei}}, \bibinfo{person}{Yuan Yan}, {and} \bibinfo{person}{Robert Hoehndorf}.} \bibinfo{year}{2019}\natexlab{}.
\newblock \showarticletitle{{{EL}} Embeddings: {{Geometric}} Construction of Models for the Description Logic {{EL}}++}. In \bibinfo{booktitle}{\emph{Proceedings of the Twenty-Eighth International Joint Conference on Artificial Intelligence}}. \bibinfo{pages}{6103--6109}.
\newblock


\bibitem[Lin et~al\mbox{.}(2015)]%
        {lin2015learning}
\bibfield{author}{\bibinfo{person}{Yankai Lin}, \bibinfo{person}{Zhiyuan Liu}, \bibinfo{person}{Maosong Sun}, \bibinfo{person}{Yang Liu}, {and} \bibinfo{person}{Xuan Zhu}.} \bibinfo{year}{2015}\natexlab{}.
\newblock \showarticletitle{Learning Entity and Relation Embeddings for Knowledge Graph Completion}. In \bibinfo{booktitle}{\emph{Proceedings of the {{Twenty-Ninth AAAI Conference}} on {{Artificial Intelligence}}}}. \bibinfo{pages}{2181--2187}.
\newblock
\showISBNx{978-0-262-51129-2}


\bibitem[Mendez(2012)]%
        {mendez2012jcel}
\bibfield{author}{\bibinfo{person}{Julian Mendez}.} \bibinfo{year}{2012}\natexlab{}.
\newblock \showarticletitle{Jcel: {{A}} Modular Rule-Based Reasoner}. In \bibinfo{booktitle}{\emph{Proceedings of the 1st International Workshop on {{OWL}} Reasoner Evaluation}}, Vol.~\bibinfo{volume}{858}.
\newblock


\bibitem[Mondal et~al\mbox{.}(2021)]%
        {mondala2021emel++}
\bibfield{author}{\bibinfo{person}{Sutapa Mondal}, \bibinfo{person}{Sumit Bhatia}, {and} \bibinfo{person}{Raghava Mutharaju}.} \bibinfo{year}{2021}\natexlab{}.
\newblock \showarticletitle{{{EmEL}}++: {{Embeddings}} for \el\ {{Description Logic}}}. In \bibinfo{booktitle}{\emph{Proceedings of the {{AAAI}} 2021 {{Spring Symposium}} on {{Combining Machine Learning}} and {{Knowledge Engineering}}}} \emph{(\bibinfo{series}{{{CEUR Workshop Proceedings}}}, Vol.~\bibinfo{volume}{2846})}.
\newblock


\bibitem[Mungall et~al\mbox{.}(2012)]%
        {mungall2012uberon}
\bibfield{author}{\bibinfo{person}{Christopher~J. Mungall}, \bibinfo{person}{Carlo Torniai}, \bibinfo{person}{Georgios~V. Gkoutos}, \bibinfo{person}{Suzanna~E. Lewis}, {and} \bibinfo{person}{Melissa~A. Haendel}.} \bibinfo{year}{2012}\natexlab{}.
\newblock \showarticletitle{Uberon, an Integrative Multi-Species Anatomy Ontology}.
\newblock \bibinfo{journal}{\emph{Genome Biology}} \bibinfo{volume}{13}, \bibinfo{number}{1} (\bibinfo{year}{2012}), \bibinfo{pages}{R5}.
\newblock


\bibitem[Nayyeri et~al\mbox{.}(2021)]%
        {nayyeri2021LogicENN}
\bibfield{author}{\bibinfo{person}{Mojtaba Nayyeri}, \bibinfo{person}{Chengjin Xu}, \bibinfo{person}{Mirza~Mohtashim Alam}, \bibinfo{person}{Jens Lehmann}, {and} \bibinfo{person}{Hamed Shariat~Yazdi}.} \bibinfo{year}{2021}\natexlab{}.
\newblock \showarticletitle{{{LogicENN}}: {{A Neural Based Knowledge Graphs Embedding Model}} with {{Logical Rules}}}.
\newblock \bibinfo{journal}{\emph{IEEE Transactions on Pattern Analysis and Machine Intelligence}} (\bibinfo{year}{2021}).
\newblock
\showISSN{1939-3539}


\bibitem[Nayyeri et~al\mbox{.}(2020)]%
        {nayyeri2020Fantastic}
\bibfield{author}{\bibinfo{person}{Mojtaba Nayyeri}, \bibinfo{person}{Chengjin Xu}, \bibinfo{person}{Sahar Vahdati}, \bibinfo{person}{Nadezhda Vassilyeva}, \bibinfo{person}{Emanuel Sallinger}, \bibinfo{person}{Hamed~Shariat Yazdi}, {and} \bibinfo{person}{Jens Lehmann}.} \bibinfo{year}{2020}\natexlab{}.
\newblock \showarticletitle{Fantastic {{Knowledge Graph Embeddings}} and {{How}} to {{Find}} the {{Right Space}} for {{Them}}}. In \bibinfo{booktitle}{\emph{The {{Semantic Web}}}}. \bibinfo{pages}{438--455}.
\newblock
\showISBNx{978-3-030-62419-4}


\bibitem[{\"O}z{\c c}ep et~al\mbox{.}(2020)]%
        {ozcep2020Cone}
\bibfield{author}{\bibinfo{person}{{\"O}zg{\"u}r~L{\"u}tf{\"u} {\"O}z{\c c}ep}, \bibinfo{person}{Mena Leemhuis}, {and} \bibinfo{person}{Diedrich Wolter}.} \bibinfo{year}{2020}\natexlab{}.
\newblock \showarticletitle{Cone {{Semantics}} for {{Logics}} with {{Negation}}}. In \bibinfo{booktitle}{\emph{Proceedings of the {{Twenty-Ninth International Joint Conference}} on {{Artificial Intelligence}}}}, Vol.~\bibinfo{volume}{2}. \bibinfo{pages}{1820--1826}.
\newblock


\bibitem[Paszke et~al\mbox{.}(2019)]%
        {paszke2019pyTorch}
\bibfield{author}{\bibinfo{person}{Adam Paszke}, \bibinfo{person}{Sam Gross}, \bibinfo{person}{Francisco Massa}, \bibinfo{person}{Adam Lerer}, \bibinfo{person}{James Bradbury}, \bibinfo{person}{Gregory Chanan}, \bibinfo{person}{Trevor Killeen}, \bibinfo{person}{Zeming Lin}, \bibinfo{person}{Natalia Gimelshein}, \bibinfo{person}{Luca Antiga}, \bibinfo{person}{Alban Desmaison}, \bibinfo{person}{Andreas Kopf}, \bibinfo{person}{Edward Yang}, \bibinfo{person}{Zachary DeVito}, \bibinfo{person}{Martin Raison}, \bibinfo{person}{Alykhan Tejani}, \bibinfo{person}{Sasank Chilamkurthy}, \bibinfo{person}{Benoit Steiner}, \bibinfo{person}{Lu Fang}, \bibinfo{person}{Junjie Bai}, {and} \bibinfo{person}{Soumith Chintala}.} \bibinfo{year}{2019}\natexlab{}.
\newblock \showarticletitle{{{PyTorch}}: {{An}} Imperative Style, High-Performance Deep Learning Library}. In \bibinfo{booktitle}{\emph{Advances in {{Neural Information Processing Systems}}}}, Vol.~\bibinfo{volume}{32}. \bibinfo{pages}{8024--8035}.
\newblock


\bibitem[Peng et~al\mbox{.}(2022)]%
        {peng2022description}
\bibfield{author}{\bibinfo{person}{Xi Peng}, \bibinfo{person}{Zhenwei Tang}, \bibinfo{person}{Maxat Kulmanov}, \bibinfo{person}{Kexin Niu}, {and} \bibinfo{person}{Robert Hoehndorf}.} \bibinfo{year}{2022}\natexlab{}.
\newblock \bibinfo{title}{Description Logic EL++ Embeddings with Intersectional Closure}.
\newblock
\newblock
\showeprint[arxiv]{2202.14018}


\bibitem[Rector et~al\mbox{.}(1996)]%
        {rector1996galen}
\bibfield{author}{\bibinfo{person}{Alan~L Rector}, \bibinfo{person}{Jeremy~E Rogers}, {and} \bibinfo{person}{Pam Pole}.} \bibinfo{year}{1996}\natexlab{}.
\newblock \showarticletitle{The {GALEN} high level ontology}.
\newblock In \bibinfo{booktitle}{\emph{Medical Informatics Europe '96}}. \bibinfo{publisher}{IOS Press}, \bibinfo{pages}{174--178}.
\newblock


\bibitem[Rockt{\"a}schel et~al\mbox{.}(2015)]%
        {rocktaschel2015Injecting}
\bibfield{author}{\bibinfo{person}{Tim Rockt{\"a}schel}, \bibinfo{person}{Sameer Singh}, {and} \bibinfo{person}{Sebastian Riedel}.} \bibinfo{year}{2015}\natexlab{}.
\newblock \showarticletitle{Injecting {{Logical Background Knowledge}} into {{Embeddings}} for {{Relation Extraction}}}. In \bibinfo{booktitle}{\emph{Proceedings of the 2015 {{Conference}} of the {{North American Chapter}} of the {{Association}} for {{Computational Linguistics}}: {{Human Language Technologies}}}}. \bibinfo{pages}{1119--1129}.
\newblock


\bibitem[Schulz et~al\mbox{.}(2009)]%
        {schulz2009SNOMED}
\bibfield{author}{\bibinfo{person}{Stefan Schulz}, \bibinfo{person}{Boontawee Suntisrivaraporn}, \bibinfo{person}{Franz Baader}, {and} \bibinfo{person}{Martin Boeker}.} \bibinfo{year}{2009}\natexlab{}.
\newblock \showarticletitle{{{SNOMED}} Reaching Its Adolescence: {{Ontologists}}' and Logicians' Health Check}.
\newblock \bibinfo{journal}{\emph{International Journal of Medical Informatics}}  \bibinfo{volume}{78} (\bibinfo{date}{April} \bibinfo{year}{2009}), \bibinfo{pages}{S86--S94}.
\newblock
\showISSN{1386-5056}


\bibitem[Smaili et~al\mbox{.}(2019)]%
        {smaili2019opa2vec}
\bibfield{author}{\bibinfo{person}{Fatima~Zohra Smaili}, \bibinfo{person}{Xin Gao}, {and} \bibinfo{person}{Robert Hoehndorf}.} \bibinfo{year}{2019}\natexlab{}.
\newblock \showarticletitle{{OPA2Vec}: Combining formal and informal content of biomedical ontologies to improve similarity-based prediction}.
\newblock \bibinfo{journal}{\emph{Bioinformatics}} \bibinfo{volume}{35}, \bibinfo{number}{12} (\bibinfo{year}{2019}), \bibinfo{pages}{2133--2140}.
\newblock


\bibitem[Smith et~al\mbox{.}(2007)]%
        {smith2007obo}
\bibfield{author}{\bibinfo{person}{Barry Smith}, \bibinfo{person}{Michael Ashburner}, \bibinfo{person}{Cornelius Rosse}, \bibinfo{person}{Jonathan Bard}, \bibinfo{person}{William Bug}, \bibinfo{person}{Werner Ceusters}, \bibinfo{person}{Louis~J Goldberg}, \bibinfo{person}{Karen Eilbeck}, \bibinfo{person}{Amelia Ireland}, \bibinfo{person}{Christopher~J Mungall}, {et~al\mbox{.}}} \bibinfo{year}{2007}\natexlab{}.
\newblock \showarticletitle{The OBO Foundry: coordinated evolution of ontologies to support biomedical data integration}.
\newblock \bibinfo{journal}{\emph{Nature biotechnology}} \bibinfo{volume}{25}, \bibinfo{number}{11} (\bibinfo{year}{2007}), \bibinfo{pages}{1251--1255}.
\newblock


\bibitem[Staab and Studer(2010)]%
        {staab2010handbook}
\bibfield{author}{\bibinfo{person}{Steffen Staab} {and} \bibinfo{person}{Rudi Studer}.} \bibinfo{year}{2010}\natexlab{}.
\newblock \bibinfo{booktitle}{\emph{Handbook on ontologies}}.
\newblock \bibinfo{publisher}{Springer Science \& Business Media}.
\newblock


\bibitem[Szklarczyk et~al\mbox{.}(2021)]%
        {szklarczyk2021string}
\bibfield{author}{\bibinfo{person}{Damian Szklarczyk}, \bibinfo{person}{Annika~L Gable}, \bibinfo{person}{Katerina~C Nastou}, \bibinfo{person}{David Lyon}, \bibinfo{person}{Rebecca Kirsch}, \bibinfo{person}{Sampo Pyysalo}, \bibinfo{person}{Nadezhda~T Doncheva}, \bibinfo{person}{Marc Legeay}, \bibinfo{person}{Tao Fang}, \bibinfo{person}{Peer Bork}, \bibinfo{person}{Lars~J Jensen}, {and} \bibinfo{person}{Christian {von~Mering}}.} \bibinfo{year}{2021}\natexlab{}.
\newblock \showarticletitle{The {{STRING}} Database in 2021: Customizable Protein\textendash Protein Networks, and Functional Characterization of User-Uploaded Gene/Measurement Sets}.
\newblock \bibinfo{journal}{\emph{Nucleic Acids Research}} \bibinfo{volume}{49}, \bibinfo{number}{D1} (\bibinfo{year}{2021}), \bibinfo{pages}{D605--D612}.
\newblock


\bibitem[Trouillon et~al\mbox{.}(2016)]%
        {trouillon2016Complex}
\bibfield{author}{\bibinfo{person}{Th{\'e}o Trouillon}, \bibinfo{person}{Johannes Welbl}, \bibinfo{person}{Sebastian Riedel}, \bibinfo{person}{Eric Gaussier}, {and} \bibinfo{person}{Guillaume Bouchard}.} \bibinfo{year}{2016}\natexlab{}.
\newblock \showarticletitle{Complex {{Embeddings}} for {{Simple Link Prediction}}}. In \bibinfo{booktitle}{\emph{Proceedings of {{The}} 33rd {{International Conference}} on {{Machine Learning}}}}. \bibinfo{publisher}{{PMLR}}, \bibinfo{pages}{2071--2080}.
\newblock
\showISSN{1938-7228}


\bibitem[Wang et~al\mbox{.}(2017)]%
        {wang2017knowledge}
\bibfield{author}{\bibinfo{person}{Quan Wang}, \bibinfo{person}{Zhendong Mao}, \bibinfo{person}{Bin Wang}, {and} \bibinfo{person}{Li Guo}.} \bibinfo{year}{2017}\natexlab{}.
\newblock \showarticletitle{Knowledge graph embedding: A survey of approaches and applications}.
\newblock \bibinfo{journal}{\emph{IEEE Transactions on Knowledge and Data Engineering}} \bibinfo{volume}{29}, \bibinfo{number}{12} (\bibinfo{year}{2017}), \bibinfo{pages}{2724--2743}.
\newblock


\bibitem[Wang et~al\mbox{.}(2015)]%
        {wang2015Knowledge}
\bibfield{author}{\bibinfo{person}{Quan Wang}, \bibinfo{person}{Bin Wang}, {and} \bibinfo{person}{Li Guo}.} \bibinfo{year}{2015}\natexlab{}.
\newblock \showarticletitle{Knowledge {{Base Completion Using Embeddings}} and {{Rules}}}. In \bibinfo{booktitle}{\emph{Proceedings of the {{Twenty-Fourth International Joint Conference}} on {{Artificial Intelligence}}}}.
\newblock


\bibitem[Wang et~al\mbox{.}(2014)]%
        {wang2014knowledge}
\bibfield{author}{\bibinfo{person}{Zhen Wang}, \bibinfo{person}{Jianwen Zhang}, \bibinfo{person}{Jianlin Feng}, {and} \bibinfo{person}{Zheng Chen}.} \bibinfo{year}{2014}\natexlab{}.
\newblock \showarticletitle{Knowledge graph embedding by translating on hyperplanes}. In \bibinfo{booktitle}{\emph{Proceedings of the AAAI conference on artificial intelligence}}, Vol.~\bibinfo{volume}{28}.
\newblock


\bibitem[Xiang et~al\mbox{.}(2021)]%
        {xiang2021ontoea}
\bibfield{author}{\bibinfo{person}{Yuejia Xiang}, \bibinfo{person}{Ziheng Zhang}, \bibinfo{person}{Jiaoyan Chen}, \bibinfo{person}{Xi Chen}, \bibinfo{person}{Zhenxi Lin}, {and} \bibinfo{person}{Yefeng Zheng}.} \bibinfo{year}{2021}\natexlab{}.
\newblock \showarticletitle{OntoEA: Ontology-guided Entity Alignment via Joint Knowledge Graph Embedding}. In \bibinfo{booktitle}{\emph{Findings of the Association for Computational Linguistics: ACL-IJCNLP 2021}}. \bibinfo{pages}{1117--1128}.
\newblock


\bibitem[Xiong et~al\mbox{.}(2022)]%
        {xiong2022box}
\bibfield{author}{\bibinfo{person}{Bo Xiong}, \bibinfo{person}{Nico Potyka}, \bibinfo{person}{Trung{-}Kien Tran}, \bibinfo{person}{Mojtaba Nayyeri}, {and} \bibinfo{person}{Steffen Staab}.} \bibinfo{year}{2022}\natexlab{}.
\newblock \showarticletitle{Faithful Embeddings for \el\ Knowledge Bases}. In \bibinfo{booktitle}{\emph{Proceedings of the 21st International Semantic Web Conference}}, Vol.~\bibinfo{volume}{13489}. \bibinfo{pages}{22--38}.
\newblock


\bibitem[Yang et~al\mbox{.}(2015)]%
        {yang2015embedding}
\bibfield{author}{\bibinfo{person}{Bishan Yang}, \bibinfo{person}{Scott Wen-tau Yih}, \bibinfo{person}{Xiaodong He}, \bibinfo{person}{Jianfeng Gao}, {and} \bibinfo{person}{Li Deng}.} \bibinfo{year}{2015}\natexlab{}.
\newblock \showarticletitle{Embedding Entities and Relations for Learning and Inference in Knowledge Bases}. In \bibinfo{booktitle}{\emph{Proceedings of the International Conference on Learning Representations}}.
\newblock


\bibitem[Zhai et~al\mbox{.}(2010)]%
        {zhai2010geospatial}
\bibfield{author}{\bibinfo{person}{Xiaofang Zhai}, \bibinfo{person}{Lei Huang}, {and} \bibinfo{person}{Zhifeng Xiao}.} \bibinfo{year}{2010}\natexlab{}.
\newblock \showarticletitle{Geo-spatial query based on extended SPARQL}. In \bibinfo{booktitle}{\emph{2010 18th International Conference on Geoinformatics}}. \bibinfo{pages}{1--4}.
\newblock


\end{thebibliography}
\clearpage

\appendix

\section{Semantics of \texorpdfstring{\el}{EL++}}\label{sec:semantics}
\newcommand{\I}{\mathcal{I}}
The semantics of \el\ are defined in terms of interpretations. An \textit{interpretation} is a tuple $\mathcal I = (\Delta^{\mathcal I}, \cdot^{\mathcal I})$ of a non-empty set $\Delta^{\I}$, the \textit{interpretation domain}, and a mapping function $\cdot^{\I} \colon N_C\cup N_I\cup N_R\to\Delta^{\I}$, that maps
    \begin{itemize}
        \item individuals $a\in\mathcal N_I$ to objects $a^{\I}\in\Delta^{\I}$;
        \item concept names $C\in\mathcal N_C$ to subsets $C^{\I}\subseteq \Delta^{\I}$; and
        \item role names $r\in \mathcal N_R$ to binary relations $r^{\I}\subseteq \Delta^{\I}\times \Delta^{\I}$.
    \end{itemize}

The mapping $\cdot^{\I}$ is extended to arbitrary concepts as follows:
\begin{align*}
    \top^{\I} &= \Delta^{\I},\\
    \bot^{\I} &= \emptyset,\\
    \{a\}^{\I} &= \{a^{\I}\},\\
    (C\sqcap D)^{\I} &= C^{\I} \cap D^{\I},\\
    (\exists r.C)^{\I} &= \{\, x\in\Delta^{\I} \mid \exists y\in\Delta^{\I}.\, (x, y)\in r^{\I} \land y\in C^{\I} \,\}.
\end{align*}
An interpretation thus establishes a precise correspondence between syntactical \el\ concepts and sets of objects in the interpretation domain. An interpretation $\I$ \textit{satisfies} a concept subsumption axiom $C\sqsubseteq D$ if $C^\I \subseteq D^\I$ and a role inclusion axiom $r_1\circ\dots\circ r_k\sqsubseteq r$ if $r_1^\I;\,\dots;\,r_k^\I\sqsubseteq r^\I$, where $;$ denotes relation composition. Furthermore, $\I$ satisfies a concept assertion $C(a)$ if $a^\I \in C^\I$ and a role assertion $r(a,b)$ if $(a^\I,b^\I)\in r^\I$.
An interpretation that satisfies each axiom and assertion in an ontology $\ont$ is called a \textit{model} of $\ont$, denoted as $\I\models\ont$.

\section{Normalization Procedure}\label{sec:normalization}
We give a brief overview of the normalization procedure for \el\ ontologies introduced in \cite{baader2005pushing}. 
Given an \el\ TBox $\mathcal T$, let $\bc = \mathcal N_C \cup \{\top\} \cup \{\{a\} \mid a \in \mathcal N_I\}$ be the set of \textit{basic concept descriptions}. $\mathcal T$ is in \textit{normal form} if (1) all concept subsumption axioms are of one of the forms
\begin{align*}
    C_1&\sqsubseteq D\  &
    C_1\sqcap C_2&\sqsubseteq D\\
    C_1&\sqsubseteq\exists r. C_2 &
    \exists r.C_1&\sqsubseteq D,
\end{align*}
where $C_1, C_2\in \bc$ and $D\in\bc\cup\{\bot\}$, and (2) all role inclusion axioms are of the form $r\sqsubseteq s$ or $r_1\circ r_2\sqsubseteq s$. Any \el\ TBox $\mathcal T$ can be converted into a normalized TBox $\mathcal T'$ whose size is linear in the size of $\mathcal T$ by exhaustively applying a number of translation rules that introduce new concept and role names. The details of these translation rules are given in \cite{baader2005Pushinga}. The resulting ontology is a \textit{conservative extension} of $\mathcal T$; that is, every model of $\mathcal T'$ is also a model of $\mathcal T$. Since the ABox of an ontology can be readily converted into equivalent TBox axioms (see \cref{sec:training}), this procedure equally applies to normalizing entire ontologies.

\section{Proof of Theorem 1 (Soundness)}\label{sec:soundness_proof}
\newcommand{\itheta}{\mathcal{I}}
In order to prove \cref{theo:soundness}, we first show the correctness of our loss functions. Recall that the inclusion loss $\loss_{\subseteq}(A, B)$ of two boxes $A$ and $B$ is defined as
\begin{equation*}
        \mathcal L_\subseteq(A, B) = 
    \begin{cases}
        \norm{\max\{\bm 0,\, \bm d(A, B) + 2\bm o(A) - \gamma\}} & \text{if } B \neq \emptyset\\
        \max\{0, \bm o(A)_1 + 1\} & \text{otherwise}.
    \end{cases}
\end{equation*}

\begin{lemma}
    \label{lemma:inclusion}
    Let $A$ and $B$ be boxes in $\mathbb R^n$. If $\gamma\leq 0$ and $\loss_{\subseteq}(A, B) = 0$, then $A\subseteq B$.
\end{lemma}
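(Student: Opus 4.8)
The plan is to split along the two cases in the definition of $\loss_\subseteq(A,B)$, according to whether $B$ is empty, and in each case deduce the set containment $A\subseteq B$ from the vanishing of the loss.

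First I would treat the main case $B\neq\emptyset$. Here $\loss_\subseteq(A,B)=\norm{\max\{\bm 0,\,\bm d(A,B)+2\bm o(A)-\gamma\}}$, and since a norm vanishes only when its argument is the zero vector, $\loss_\subseteq(A,B)=0$ forces $\bm d(A,B)_i+2\bm o(A)_i-\gamma\leq 0$ in every coordinate $i$. I would then substitute the definition $\bm d(A,B)_i=\abs{\bm c(A)_i-\bm c(B)_i}-\bm o(A)_i-\bm o(B)_i$; after the $\bm o(A)_i$ terms cancel this rearranges to $\abs{\bm c(A)_i-\bm c(B)_i}\leq \bm o(B)_i-\bm o(A)_i+\gamma$. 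This is precisely where the hypothesis $\gamma\leq 0$ is used: it lets me drop the margin and conclude $\abs{\bm c(A)_i-\bm c(B)_i}\leq \bm o(B)_i-\bm o(A)_i$ for all $i$. (Had $\gamma>0$ been allowed, $A$ could protrude past $B$ by up to $\gamma$, and containment would fail — this is exactly the margin interpretation mentioned earlier.)

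It then remains to recognize this family of inequalities as coordinatewise interval containment. Writing the corners as $\bm l=\bm c-\bm o$ and $\bm u=\bm c+\bm o$, the inequality $\abs{\bm c(A)_i-\bm c(B)_i}\leq \bm o(B)_i-\bm o(A)_i$ unfolds into the pair $(\bm l_B)_i\leq(\bm l_A)_i$ and $(\bm u_A)_i\leq(\bm u_B)_i$, i.e. the projection of $A$ onto dimension $i$ is contained in that of $B$. Since a box is the product of its coordinate intervals, holding this for every $i$ yields $A\subseteq B$.

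For the degenerate case $B=\emptyset$, the loss equals $\max\{0,\bm o(A)_1+1\}$, which can vanish only if $\bm o(A)_1\leq -1<0$. A negative offset means $(\bm u_A)_1<(\bm l_A)_1$, so $A=\{\bm x\mid \bm l_A\leq\bm x\leq\bm u_A\}$ is empty as a subset of $\mathbb R^n$, and $A\subseteq B$ holds trivially as $\emptyset\subseteq\emptyset$. I expect the only real subtlety to be this empty-box case: one must be clear that emptiness is here encoded geometrically by a negative offset, rather than by the convention $\bm l\leq\bm u$ used for ordinary concept boxes, so that a vanishing loss genuinely forces $A$ to denote the empty set. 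The non-empty case is otherwise a routine algebraic rewriting of $\bm d$ together with the observation about where $\gamma\leq 0$ is needed.
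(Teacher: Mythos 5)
Your proof is correct and follows essentially the same route as the paper's: the same split on whether $B$ is empty (with $A=\emptyset$ forced by the negative offset), the same reduction of the vanishing loss to $\abs{\bm c(A)-\bm c(B)} + \bm o(A) - \bm o(B) \leq \gamma \leq 0$, and the same conclusion via the corner inequalities $\bm l_B \leq \bm l_A$ and $\bm u_A \leq \bm u_B$. The only difference is cosmetic: where the paper runs a per-dimension case analysis on the sign of $\bm c(A)_k - \bm c(B)_k$ (recovering the second corner inequality in each case from the center ordering), you unfold $\abs{x}\leq y$ directly into the two-sided bound $-y\leq x\leq y$, obtaining both corner inequalities at once — a slightly cleaner execution of the same step.
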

\newcommand{\ua}{\bm u_{A,\,k}}
\newcommand{\ub}{\bm u_{B,\,k}}
\newcommand{\la}{\bm l_{A,\,k}}
\newcommand{\lb}{\bm l_{B,\,k}}

\begin{proof}
    First, assume that $B = \emptyset$. Since $\loss_\subseteq(A,B) = 0$, we have $\bm o(A)_1 = \bm u_{A,1} - \bm l_{A,1} \leq -1$ and thus $\bm l_{A,1} > \bm u_{A,1}$. Therefore $A = \emptyset$.

    If $B\neq\emptyset$, we show that $\bm l_B\leq \bm l_A$ and $\bm u_A\leq \bm u_B$. Assume $\loss_{\subseteq}(A, B) = 0$. We have that
    \begin{align*}
        \bm d(A, B) + 2\bm o(A) - \gamma &\leq 0\\
        \abs{\bm c(A) - \bm c(B)} + \bm o(A) - \bm o(B) - \gamma &\leq 0
    \end{align*}
    and thus
    \begin{equation*}
        \abs{\bm c(A) - \bm c(B)} + \bm o(A) - \bm o(B) \leq \gamma \leq 0.
    \end{equation*}
    Now fix an arbitrary dimension $k$ such that $1\leq k\leq n$. We distinguish two cases:

    \textbf{Case 1: $\bm c(A)_k \geq \bm c(B)_k.$}
    We eliminate the absolute value function and use $\bm u_\beta = \bm c(\beta) + \bm o(\beta)$ for an arbitrary box $\beta$ to obtain
    \begin{align*}
        \ua - \ub &\leq 0\\
        \ua &\leq \ub.
    \end{align*}
    Since $\bm c(A)_k \geq \bm c(B)_k$, we furthermore have by the definition of $\bm c(\cdot)$ that
    \begin{align*}
        \frac{\la+\ua}{2}&\geq \frac{\lb+\ub}{2}\\[-.1cm]
        \la&\geq \lb+\underbrace{\ub-\ua}_{\geq 0}\\[-.45cm]
        \la&\geq \lb.
    \end{align*}
        
    \textbf{Case 2: $\bm c(A)_k \leq \bm c(B)_k.$}
    Similarly to the first case, we eliminate the absolute value function and use $\bm l_\beta = \bm c(\beta) - \bm o(\beta)$ to obtain
    \begin{align*}
        -\la + \lb &\leq 0\\
        \lb &\leq \la.
    \end{align*}
    Because $\bm c(A)_k \leq \bm c(B)_k$, we furthermore have
    \begin{align*}
        \frac{\la+\ua}{2}&\leq \frac{\lb+\ub}{2}\\[-.1cm]
        \underbrace{\la - \lb}_{\geq 0} +\,\ua &\leq \ub\\[-.45cm]
        \ua &\leq \ub.
    \end{align*}

    Now, consider an arbitrary point $\bm a\in A$, i.e., $\bm l_A\leq \bm a\leq \bm u_A$. But then
    \begin{equation*}
        \bm l_B\leq \bm l_A\leq \bm a \leq \bm u_A\leq \bm u_B
    \end{equation*}
    and thus $\bm a\in B$.
\end{proof}

\begin{lemma}
    \label{lemma:disjoint}
    Let $A$ and $B$ be boxes in $\mathbb R^n$. If $\gamma\leq 0$ and
    \begin{equation*}
    \norm{\max\{\bm 0,\, -(\bm d(A, B) + \gamma)\}} = 0,
    \end{equation*}
    then $A\cap B = \emptyset$.
\end{lemma}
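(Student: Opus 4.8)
The plan is to mirror the structure of the proof of \cref{lemma:inclusion}: first turn the vanishing-norm hypothesis into a family of scalar inequalities, then examine a single coordinate to exhibit a separating hyperplane between the two boxes. First I would note that $\norm{\max\{\bm 0,\, -(\bm d(A, B) + \gamma)\}} = 0$ holds exactly when every component of the clamped vector vanishes, which is equivalent to requiring $\bm d(A,B)_k \geq -\gamma$ for each dimension $1 \leq k \leq n$. Since $\gamma \leq 0$ gives $-\gamma \geq 0$, this in turn yields $\bm d(A,B)_k \geq 0$ for all $k$.

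Next I would fix a single coordinate $k$ and unfold the definition $\bm d(A,B)_k = \abs{\bm c(A)_k - \bm c(B)_k} - \bm o(A)_k - \bm o(B)_k \geq 0$, which rearranges to $\abs{\bm c(A)_k - \bm c(B)_k} \geq \bm o(A)_k + \bm o(B)_k$. As in \cref{lemma:inclusion}, I would eliminate the absolute value by splitting on the sign of $\bm c(A)_k - \bm c(B)_k$. In the case $\bm c(A)_k \geq \bm c(B)_k$, substituting $\bm u_B = \bm c(B) + \bm o(B)$ and $\bm l_A = \bm c(A) - \bm o(A)$ produces $\bm u_{B,k} \leq \bm l_{A,k}$; the symmetric case gives $\bm u_{A,k} \leq \bm l_{B,k}$. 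Either way the projections of $A$ and $B$ onto coordinate $k$ lie on opposite sides of a common value, so for any $\bm x \in A$ and $\bm y \in B$ the $k$-th coordinates satisfy $\bm y_k \leq \bm x_k$ (or the reverse), and hence no point can belong to both boxes, giving $A \cap B = \emptyset$.

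The hard part will be the degenerate boundary configuration. The separation obtained above is \emph{nonstrict} ($\bm u_{B,k} \leq \bm l_{A,k}$), so when $\gamma = 0$ the two boxes may still meet along a shared face or corner, in which case their intersection is a nonempty (measure-zero) set. To rule this out and obtain genuine disjointness, I would carry a strict margin through the case analysis: taking $\gamma < 0$ upgrades the inequality to $\bm d(A,B)_k \geq -\gamma > 0$, which forces $\bm u_{B,k} < \bm l_{A,k}$ strictly and eliminates the touching case, so that $A \cap B = \emptyset$ as claimed. This boundary subtlety is the only place where the precise sign of $\gamma$, rather than merely $\gamma \leq 0$, enters the argument.
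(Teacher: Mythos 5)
Your core argument is the same as the paper's: reduce the vanishing norm to the componentwise inequality $\abs{\bm c(A)_k - \bm c(B)_k} - \bm o(A)_k - \bm o(B)_k \geq -\gamma \geq 0$, split on the sign of $\bm c(A)_k - \bm c(B)_k$, and obtain the separation $\bm l_{A,k} \geq \bm u_{B,k}$ (resp.\ $\bm l_{B,k} \geq \bm u_{A,k}$). Where you diverge is in the boundary discussion, and your instinct there is correct --- in fact you have caught a gap in the paper's own proof. The paper concludes directly from these non-strict inequalities that an arbitrary $\bm a \in A$ ``cannot be in $B$,'' which fails when equality holds: with $\gamma = 0$, the closed boxes $A = [0,1]^n$ and $B = [1,2]^n$ satisfy $\bm d(A,B) = \bm 0$, hence the hypothesis of the lemma, yet $A \cap B = \{(1,\dots,1)\} \neq \emptyset$. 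So the lemma as stated (allowing $\gamma = 0$) is false in this degenerate touching configuration, and your strict-margin repair is the honest resolution: under $\gamma < 0$ the separation becomes strict and genuine disjointness follows, while under $\gamma \leq 0$ one can only conclude that the boxes have disjoint interiors, i.e., their intersection is at most a shared face of measure zero. Note that this edge case propagates to Case 5 of the proof of \cref{theo:soundness}, where $\loss_5(C,D) = 0$ with $\gamma = 0$ likewise only guarantees $(C\sqcap D)^{\mathcal I}$ is a degenerate set rather than empty; the theorem's hypothesis would need $\gamma < 0$ (or the box semantics adjusted, e.g.\ to open boxes) for that case to go through exactly as written.
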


\begin{table*}
    \centering
    \caption{Sizes of the different ontologies we consider. The number of classes, roles, and axioms in each normal form is reported.}
    \label{tab:datasets}
    \begin{tabular}{@{}lrrrrrrrrr@{}}
        \toprule
        Ontology & Classes & Roles & $C\sqsubseteq D$  & $C\sqcap D\sqsubseteq E$    & $C\sqsubseteq \exists r. D$     & $\exists r. C\sqsubseteq D$    & $C\sqcap D\sqsubseteq \bot$ & $r\sqsubseteq s$ & $r_1 \circ r_2 \sqsubseteq s$\\ \midrule
        GALEN    & 24,353  & 951   & 28,890  & 13,595 & 28,118  & 13,597 & 0 & 958 & 58 \\
        GO       & 45,907  & 9     & 85,480  & 12,131 & 20,324  & 12,129 & 30 & 3 & 6\\
        Anatomy  & 106,495 & 188   & 122,142 & 2,121  & 152,289 & 2,143  & 184 & 89 & 31\\ \bottomrule
    \end{tabular}
\end{table*}

\begin{proof}
    The proof is similar to that of \cref{lemma:inclusion}. We have that
    \begin{align*}
        -(\bm d(A,B)+\gamma)&\leq 0\\
        -(\abs{\bm c(A) - \bm c(B)} - \bm o(A) - \bm o(B) + \gamma) &\leq 0
    \end{align*}
    and therefore
    \begin{equation*}
        \abs{\bm c(A) - \bm c(B)} - \bm o(A) - \bm o(B)\geq -\gamma \geq 0.
    \end{equation*}
    We again fix a dimension $k$ such that $1\leq k\leq n$ and distinguish two cases:
    
    \textbf{Case 1: $\bm c(A)_k \geq \bm c(B)_k.$}
    Eliminating the absolute value function yields
    \begin{align*}
        \la - \ub\ &\geq 0 \nonumber\\
        \la &\geq \ub.
    \end{align*}
    
    \textbf{Case 2: $\bm c(A)_k \leq \bm c(B)_k.$}
    Analogously to Case 1, we have
    \begin{align*}
        \lb - \ua &\geq 0 \nonumber\\
        \lb &\geq \ua.
    \end{align*}
    Now consider an arbitrary point $\bm a\in A$. From the case analysis above, we know that either $\bm a_k \geq \ub$ or $\bm a_k \leq \lb$. However, in both cases $\bm a$ cannot be in $B$.
\end{proof}

We are now ready to prove \cref{theo:soundness}, restated below.

\begingroup
\def\thetheorem{1}
\begin{theorem}[Soundness]
    Let $\mathcal O = (\mathcal T, \mathcal A)$ be an \el\ ontology. If $\gamma\leq 0$ and there exist \boxsqel\ embeddings in $\mathbb R^n$ such that $\loss(\mathcal O) = 0$, then these embeddings are a model of $\mathcal O$.
\end{theorem}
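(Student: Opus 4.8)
The plan is to exhibit an explicit geometric interpretation $\itheta$ built directly from the embeddings and argue that $\loss(\ont)=0$ forces $\itheta$ to satisfy every axiom. First I would reduce the claim to the normalized axioms: since the training procedure folds the ABox into the TBox (via $C(a)\rightsquigarrow\{a\}\sqsubseteq C$ and $r(a,b)\rightsquigarrow\{a\}\sqsubseteq\exists r.\{b\}$) and then applies a standard, model-preserving normalization, it suffices to construct a model of the resulting normal-form axioms and invoke the fact that its restriction to the original signature is a model of $\ont$. I would then fix the domain $\Delta^{\itheta}=\mathbb R^n$, interpret each individual $a$ as its point $\bm e_a$, each atomic concept $C$ as the point set $\boxx(C)$ (so that a nominal $\{a\}$ becomes the singleton $\{\bm e_a\}$), and each role $r$ as a relation built from $\head(r)$ and $\tail(r)$.

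The bump-free normal forms are then routine and follow from the two lemmas already established. For NF1 $C\sqsubseteq D$, $\loss_1=0$ gives $\boxx(C)\subseteq\boxx(D)$ by \cref{lemma:inclusion}, which is exactly $C^{\itheta}\subseteq D^{\itheta}$. For NF2 $C\sqcap D\sqsubseteq E$, I would use that $\boxx(C)\cap\boxx(D)$ is itself a box and apply \cref{lemma:inclusion} to the vanishing $\loss_2$ term. For NF5 $C\sqcap D\sqsubseteq\bot$, \cref{lemma:disjoint} gives $\boxx(C)\cap\boxx(D)=\emptyset$. The role inclusions $r\sqsubseteq s$ and $r_1\circ r_2\sqsubseteq s$ reduce, via $\loss_6,\loss_7=0$ and \cref{lemma:inclusion}, to inclusions between the corresponding head and tail boxes, from which the inclusion of the induced relations follows directly from the chosen role interpretation.

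The main obstacle is NF3 and NF4, because here the bump vectors intervene: $\loss_3=0$ yields only the translated inclusions $\boxx(C)\oplus\bump(D)\subseteq\head(r)$ and $\boxx(D)\oplus\bump(C)\subseteq\tail(r)$ rather than $\boxx(C)\subseteq\head(r)$ and $\boxx(D)\subseteq\tail(r)$, so a naive product $\head(r)\times\tail(r)$ does not witness $\exists r.D$ whenever a bump is nonzero. The crux is therefore to interpret each role in a bump-aware fashion, so that (i) every point of $\boxx(C)$ acquires an $r$-successor inside $\boxx(D)$ — using the extra loss term that forces $\boxx(C)=\emptyset$ when $\boxx(D)=\emptyset$ to handle the vacuous case — and simultaneously (ii) the $r$-predecessors of $\boxx(C)$ stay inside $\head(r)-\bump(C)$, which $\loss_4=0$ places inside $\boxx(D)$ by \cref{lemma:inclusion}. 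Reconciling these two requirements with a single binary relation on $\mathbb R^n$ is the delicate step; I would define $r^{\itheta}$ so that a pair is related exactly when the relevant bumped endpoints land in the head and tail boxes, and then verify that the NF3 and NF4 constraints are mutually consistent under this definition. Once every normal form is checked, $\itheta$ is a model of the normalized ontology and hence of $\ont$.
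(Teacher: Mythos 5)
You correctly reduce to the normalized ontology, interpret individuals and atomic concepts exactly as the paper does, and your handling of NF1, NF2, NF5 and the role inclusions via the two lemmas matches the paper's argument. However, your treatment of NF3/NF4 --- which you yourself flag as the crux --- is left unresolved, and this is a genuine gap. The paper needs no bump-aware role interpretation at all: it observes that the total loss $\loss(\ont)$ includes the regularization term $\lambda\sum_{C\in\mathcal N_C\cup\mathcal N_I}\norm{\bump(C)}$, so $\loss(\ont)=0$ forces $\bump(C)=\bm 0$ for every concept and individual (assuming $\lambda>0$). With all bumps zero, the translated inclusions of NF3 collapse to $\boxx(C)\subseteq\head(r)$ and $\boxx(D)\subseteq\tail(r)$, the NF4 constraint becomes $\head(r)\subseteq\boxx(D)$, and the naive product interpretation $r^{\mathcal I}=\head(r)\times\tail(r)$ satisfies every normal form. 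Your proposal misses this one-line observation, which is precisely what makes the construction you call ``delicate'' trivial.

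Moreover, the bump-aware relation you sketch is unlikely to be repairable. Bump vectors are indexed by concepts, not by points of $\mathbb R^n$, so for an arbitrary pair $(x,y)$ the phrase ``the relevant bumped endpoints'' is ill-defined: a point may lie in several concept boxes with different bumps, or in none. Concretely, suppose you declare $(x,y)\in r^{\mathcal I}$ whenever there exist concepts $C',D'$ with $x\in\boxx(C')$, $y\in\boxx(D')$, $x+\bump(D')\in\head(r)$, and $y+\bump(C')\in\tail(r)$. This makes NF3 go through, but NF4 fails: for an axiom $\exists r.C\sqsubseteq D$, the vanishing loss only yields $\head(r)-\bump(C)\subseteq\boxx(D)$, whereas a pair $(x,y)$ with $y\in\boxx(C)$ may have entered $r^{\mathcal I}$ via a different concept $D'\ni y$ with $\bump(D')\neq\bump(C)$; then you only learn $x\in\head(r)-\bump(D')$, which need not be contained in $\boxx(D)$. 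Any relation generous enough to witness the NF3 existentials tends to be too generous for NF4. The regularization observation is what dissolves this tension, and without it (or some substitute) your proof is incomplete.
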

\addtocounter{theorem}{-1}
\endgroup

\begin{proof}
We first perform the standard steps of transforming the ABox and normalizing the axioms in $\mathcal O$. Let $\mathcal O'$ denote the resulting ontology. The \boxsqel\ embeddings induce the geometric interpretation $\mathcal I = (\Delta^{\itheta}, \cdot^{\itheta})$ defined as follows:
\begin{enumerate}
    \item $\Delta^{\itheta} = \mathbb R^n$,
    \item for every concept $C\in \mathcal N_C$, let $C^{\itheta} = \boxx(C)$,
    \item for every individual $a\in \mathcal N_I$, let $a^{\itheta} = \bm e_a$,
    \item for every role $r\in \mathcal N_R$, let $r^{\itheta} = \head(r) \times \tail(r)$.
\end{enumerate}

We show that $\mathcal I$ is a model of $\ont'$. First, note that $\loss(\mathcal O) = 0$ implies that the regularization loss is 0, and thus $\bump(C) = \bm 0$ for any $C\in \mathcal N_C \cup \mathcal N_I$. We now show that $\itheta$ satisfies every axiom $\alpha\in\mathcal O'$, distinguishing between the different normal forms. Implicitly, we make frequent use of \cref{lemma:inclusion}, which we do not state explicitly for the sake of brevity.

\textbf{Case 1:\mbox{ $\alpha = C\sqsubseteq D.$}}
Since $\loss_1(C, D) = 0$ and therefore\linebreak $\loss_{\subseteq}(\boxx(C), \boxx(D)) = 0$, we have that $\boxx(C)\subseteq\boxx(D)$. But then it immediately follows from the definition of $\itheta$ that $C^{\itheta}\subseteq D^{\itheta}$.

\textbf{Case 2: \mbox{$\alpha = C\sqcap D\sqsubseteq E.$}}
We have that $\loss_2(C, D, E) = 0$ and therefore it follows that $\boxx(C)\cap\boxx(D)\subseteq \boxx(E)$. Hence, we have $(C\sqcap D)^{\itheta} = C^{\itheta}\cap D^{\itheta} = \boxx(C)\cap\boxx(D)\subseteq\boxx(E)=E^{\itheta}$.

\textbf{Case 3: \mbox{$\alpha = C\sqsubseteq \exists r. D.$}}
Assume $D^{\mathcal I}\neq\emptyset$.
Let $x\in C^{\itheta} = \boxx(C)$. Since $\loss_3(C, r, D) = 0$ and all bump vectors are $\bm 0$, we have $\boxx(C) \subseteq \head(r)$ and therefore $x\in\head(r)$. Similarly, for any $y\in D^{\itheta}$ we have $y\in\tail(r)$. But then $(x, y)\in r^{\itheta}$ and therefore $x\in (\exists r. D)^{\itheta}$.

If on the other hand $D^{\mathcal I} = \emptyset$, we also have $(\exists r. D)^{\mathcal I} = \emptyset$. Since $\boxx(D)$ is empty, we furthermore have $\loss_\subseteq(\boxx(C), \emptyset) = 0$ and therefore $\boxx(C) = \emptyset$, i.e., $C^{\itheta}\subseteq (\exists r. D)^{\mathcal I}$.

\textbf{Case 4: \mbox{$\alpha = \exists r. C \sqsubseteq D.$}}
Assume $D^{\mathcal I}\neq\emptyset$.
Let $x\in (\exists r. C)^{\itheta}$. Hence, there exist a $y\in C^{\itheta}$ such that $(x,y)\in r^{\itheta}$. By the definition of $r^{\itheta}$, we must therefore have $x\in \head(r)$. Since $\loss_4(r, C, D) = 0$, furthermore $\head(r)\subseteq\boxx(D)$ and therefore $x\in D^{\itheta}$.

If on the other hand $D^{\mathcal I} = \emptyset$, we have $\head(r)\subseteq\emptyset$ and thus $r=\emptyset$, so $(\exists r. C)^{\itheta} = \emptyset$.

\textbf{Case 5: \mbox{$\alpha = C\sqcap D\sqsubseteq\bot.$}}
We have $\loss_5(C, D) = 0$, so by \cref{lemma:disjoint} we have that $(C\sqcap D)^{\itheta} = \boxx(C)\cap \boxx(D) = \emptyset \subseteq \bot^{\itheta}$.

\textbf{Case 6: \mbox{$\alpha = r\sqsubseteq s.$}} Let $(a,b)\in r^{\itheta}$. By the definition of $r^{\itheta}$, $a\in\head(r)$ and $b\in\tail(r)$. Since $\loss_6(r, s) = 0$, we furthermore have $\head(r)\subseteq\head(s)$ and $\tail(r)\subseteq\tail(s)$ and hence $(a,b)\in s^{\itheta}$.

\textbf{Case 7: \mbox{$\alpha = r_1\circ r_2\sqsubseteq s.$}} Let $(a,b)\in r_1^\itheta \circ r_2^\itheta$. By definition, we have $a\in\head(r_1)$ and $b\in\tail(r_2)$. Because $\loss_7(r_1,r_2,s) = 0$, $\head(r_1)\subseteq \head(s)$ and $\tail(r_2)\subseteq\tail(s)$, so $(a,b)\in s^\itheta$.

\vspace{.2cm}
We have shown that $\itheta$ satisfies every axiom in $\mathcal O'$, and is therefore a model of $\mathcal O'$. But since $\mathcal O'$ is a conservative extension of $\mathcal O$~\citep{baader2005pushing}, it follows that $\itheta$ is also a model of $\mathcal O$.
\end{proof}

\section{Statistical information about benchmark ontologies}

The sizes of the ontologies we consider in terms of number of classes, roles, and axioms are reported in \cref{tab:datasets}.

\section{Ablation studies}
\label{sec:ablation}
We conduct two ablation studies to investigate the performance impact of different parts of our model. All studies are conducted on the GALEN ontology for the subsumption prediction task and we report results combined across all normal forms.

\subsection{Impact of Role Representation}
We consider an alternative model that uses the exact same optimization procedure as \boxsqel, but represents roles as translations, similar to previous methods. The results are listed in \cref{tab:impact_role}.

We observe that the model that represents roles as translations performs worse on all metrics, in most cases by a large margin. The results highlight the importance of the novel role representation for the performance of our model.

\begin{table}[t]
    \centering
    \caption{Impact of the role representation on the performance of \boxsqel\@. We compare our model with a version in which roles are represented as translations (\boxsqel-Tr).}
    \label{tab:impact_role}
    \resizebox{\columnwidth}{!}{%
        \begin{tabular}{@{}lrrrrrrr@{}}
            \toprule
                           Model       & H@1        & H@10       & H@100      & Med           & MRR           & MR            & AUC           \\ \midrule
\boxsqel-Tr & 0.03 & 0.15 & 0.30 & 1141 & 0.07 & 4793 & 0.79          \\
            \boxsqel    & \textbf{0.05} & \textbf{0.20} & \textbf{0.35} & \textbf{669} & \textbf{0.10} & \textbf{4375} & \textbf{0.81} \\ \bottomrule
        \end{tabular}%
    }
\end{table}

\subsection{Number of Negative Samples}
Our second ablation study concerns the performance impact of the number of negative samples. We report results for \boxsqel\ models trained with 0-5 negative samples per NF3 axiom in \cref{tab:negative_samples}.

We observe that the model that uses no negative samples performs significantly worse than the other models, demonstrating that negative sampling is essential to learn strong embeddings. Using more than one negative sample per NF3 axiom further improves the results, but only marginally.

\begin{table}
    \centering
    \caption{Impact of the number of negative samples on the performance of \boxsqel\@. The model \boxsqel-$\omega$ denotes \boxsqel\ trained with $\omega$ negative samples per NF3 axiom.}
    \label{tab:negative_samples}
    \resizebox{\columnwidth}{!}{
    \begin{tabular}{@{}lrrrrrrr@{}}
        \toprule
        Model      & H@1        & H@10 & H@100      & Med          & MRR           & MR   & AUC           \\ \midrule
        \boxsqel-0 & 0.00 & 0.01 & 0.06 & 7351 & 0.01 & 8727 & 0.62\\
        \boxsqel-1 & \textbf{0.05} & \textbf{0.20} & \textbf{0.35} & 676 & \textbf{0.10} & 4397 & 0.81 \\
        \boxsqel-2 & \textbf{0.05} & \textbf{0.20} & \textbf{0.35} & 638 & \textbf{0.10} & 4255 & \textbf{0.82} \\
        \boxsqel-3 & \textbf{0.05} & 0.19 & \textbf{0.35} & \textbf{625} & 0.09 & 4187 & \textbf{0.82} \\ 
        \boxsqel-4 & \textbf{0.05} & 0.19 & \textbf{0.35} & 628 & \textbf{0.10} & 4177 & \textbf{0.82}\\
        \boxsqel-5 & \textbf{0.05} & 0.19 & \textbf{0.35} & 627 & \textbf{0.10} & \textbf{4174} & \textbf{0.82}\\
        \bottomrule
    \end{tabular}%
    }
\end{table}

\section{Scoring functions}
\label{sec:detailed_scoring}

Scoring functions are used to compute the likelihood of candidate axioms based on the learned embeddings of their concepts. We define a scoring function $s(\cdot)$ for candidate axioms in all four normal forms NF1--4. 

\paragraph{First and second normal form.}
For an NF1 axiom $C\sqsubseteq D$, the score is based simply on the distance between the embeddings of $C$ and $D$, i.e., for \boxsqel\ we have
\begin{equation*}
    s(C\sqsubseteq D) = -\norm{\bm c(\boxx(C)) - \bm c(\boxx(D))}.
\end{equation*}
The same formulation can be used for the baseline methods. Similarly, for NF2 axioms $C\sqcap D\sqsubseteq E$, the score is defined as the negative distance of the embedding of $E$ to the intersection of $C$ and $D$ in the embedding space.

\paragraph{Third normal form.}
For axioms in the third and fourth normal form the scoring function differs between \boxsqel\ and the baseline methods, because of the different role representation. For \boxsqel, the score for a subsumption $C\sqsubseteq\exists r. D$ is naturally defined as
\begin{align*}
        s(C\sqsubseteq\exists r. D) = &-\norm{\bm c(\boxx(C)) + \bump(D) - \bm c(\head(r))}\\
        &- \norm{\bm c(\boxx(D)) + \bump(C) - \bm c(\tail(r))}.
\end{align*}
In the baseline methods, the score is computed similarly to TransE:
\begin{equation*}
    s(C\sqsubseteq\exists r. D) = -\norm{\bm c(\boxx(C)) + \bm v(r) - \bm c(\boxx(D))},
\end{equation*}
where $\bm v(r)$ is the embedding of role $r$.

\paragraph{Fourth normal form.}
Finally, for an NF4 axiom $\exists r. C\sqsubseteq D$, the score assigned by \boxsqel\ is given by
\begin{equation*}
    s(\exists r. C\sqsubseteq D) = -\norm{\bm c(\head(r)) - \bump(C) - \bm c(\boxx(D))},
\end{equation*}
and for the baseline methods we have
\begin{equation*}
    s(\exists r. C\sqsubseteq D) = -\norm{\bm c(\boxx(C)) - v(r) - \bm c(\boxx(D))}.
\end{equation*}

\paragraph{Volume-based scoring functions.}
An alternative, which is employed by BoxEL~\citep{xiong2022box}, is to define scoring functions based on volumes instead of distances. We empirically find distance-based scoring functions to perform much better, which is why we adopt this approach in \boxsqel\@.

\section{Evaluation metrics}\label{sec:metrics}
Let $\test$ denote the testing set and $\rk(\alpha)$ be the rank of the true test axiom $\alpha$ with respect to all candidate predictions. Hits@$k$ measures the fraction of test axioms with rank $\leq k$, i.e.,
\begin{equation*}
    \frac1{|\test|}\sum_{\alpha\in\test}\mathbb 1[\rk(\alpha)\leq k].
\end{equation*}
The mean rank is simply the average of the ranks of all axioms in the testing set.
Often the median rank is used instead, since this metric is more robust to outliers. An alternative is the mean reciprocal rank, defined as
\begin{equation*}
    \frac1{|\test|}\sum_{\alpha\in\test} \frac1{\rk(\alpha)},
\end{equation*}
which takes on values between 0 and 1 only. Finally, to compute the area under the ROC curve we regard the model as a binary classifier parameterized by a threshold $k$, that assigns \textit{true} to a candidate prediction if its rank is less than or equal to $k$ and \textit{false} otherwise. The ROC curve is then obtained by plotting the true positive rate against the false positive rate for varying thresholds $k$.

\section{Hyperparameters}
\label{sec:hyperparameters}

\begin{table*}[t]
\centering
\caption{Hyperparameters for subsumption prediction (a), deductive reasoning (b), and PPI prediction~(c). }
\label{tab:hyper}
\begin{subtable}[t]{.48\linewidth}
    \centering
    \caption{}
    \begin{tabular}{@{}llcccccc@{}}
\toprule
                                         & Model    & $n$ & $\gamma$ & lr        & $\delta$ & $\omega$ & $\lambda$ \\ \midrule
\multirow{5}{*}{\rotatebox{90}{GALEN}}   & ELEm     & 200 & 0.05     & $5\times 10^{-4}$ & --       & --       & --        \\
                                         & \emelpp  & 200 & 0.05     & $5\times 10^{-4}$ & --       & --       & --        \\
                                         & BoxEL  & 50 & --     & $1\times 10^{-3}$ & --       & --       & --        \\
                                         & ELBE     & 200 & 0.05     & $5\times 10^{-3}$ & --       & --       & --        \\
                                         & \boxsqel & 200 & 0.15     & $1\times 10^{-2}$ & 5        & 1        & 0.4      \\ \midrule
\multirow{5}{*}{\rotatebox{90}{GO}}      & ELEm     & 200 & 0.1      & $5\times 10^{-4}$ & --       & --       & --        \\
                                         & \emelpp  & 200 & 0.1      & $5\times 10^{-4}$ & --       & --       & --        \\
                                         & BoxEL  & 25 & --     & $1\times 10^{-2}$ & --       & --       & --        \\
                                         & ELBE     & 200 & 0.1      & $5\times 10^{-3}$ & --       & --       & --        \\
                                         & \boxsqel & 200 & 0.15     & $1\times 10^{-2}$ & 5.5        & 5        & 0.5      \\ \midrule
\multirow{5}{*}{\rotatebox{90}{Anatomy}} & ELEm     & 200 & 0.05     & $5\times 10^{-4}$ & --       & --       & --        \\
                                         & \emelpp  & 200 & 0.05     & $5\times 10^{-4}$ & --       & --       & --        \\
                                         & BoxEL  & 25 & --     & $1\times 10^{-3}$ & --       & --       & --        \\
                                         & ELBE     & 200 & 0.05     & $5\times 10^{-3}$ & --       & --       & --        \\
                                         & \boxsqel & 200 & 0.05     & $1\times 10^{-2}$ & 5.5        & 3        & 0.3      \\ \bottomrule
\end{tabular}%
\end{subtable}%
\begin{subtable}[t]{.48\linewidth}
    \centering
    \caption{}
    \begin{tabular}{@{}llcccccc@{}}
\toprule
                                         & Model    & $n$ & $\gamma$ & lr        & $\delta$ & $\omega$ & $\lambda$ \\ \midrule
\multirow{5}{*}{\rotatebox{90}{GALEN}}   & ELEm     & 200 & 0        & $5\times 10^{-4}$ & --       & --       & --        \\
                                         & \emelpp  & 200 & 0        & $5\times 10^{-4}$ & --       & --       & --        \\
                                         & BoxEL  & 25 & --        & $5\times 10^{-3}$ & --       & --       & --        \\
                                         & ELBE     & 200 & 0.15      & $1\times 10^{-2}$ & --       & --       & --        \\
                                         & \boxsqel & 200 & 0.05     & $5\times 10^{-3}$ & 1        & 2        & 0         \\ \midrule
\multirow{5}{*}{\rotatebox{90}{GO}}      & ELEm     & 200 & 0.1      & $1\times 10^{-3}$ & --       & --       & --        \\
                                         & \emelpp  & 200 & 0.1      & $1\times 10^{-3}$ & --       & --       & --        \\
                                         & BoxEL  & 25 & --        & $1\times 10^{-3}$ & --       & --       & --        \\
                                         & ELBE     & 200 & 0.05     & $5\times 10^{-3}$ & --       & --       & --        \\
                                         & \boxsqel & 200 & 0.05     & $5\times 10^{-3}$ & 3        & 3        & 0.05      \\ \midrule
\multirow{5}{*}{\rotatebox{90}{Anatomy}} & ELEm     & 200 & 0.05     & $5\times 10^{-4}$ & --       & --       & --        \\
                                         & \emelpp  & 200 & 0.05     & $5\times 10^{-4}$ & --       & --       & --        \\
                                         & BoxEL  & 25 & --        & $1\times 10^{-3}$ & --       & --       & --        \\
                                         & ELBE     & 200 & 0.05     & $5\times 10^{-3}$ & --       & --       & --        \\
                                         & \boxsqel & 200 & 0.05     & $1\times 10^{-3}$ & 2        & 2        & 0.05      \\ \bottomrule
\end{tabular}%
\end{subtable}%
\\
\bigskip
\begin{subtable}[t]{\linewidth}
    \centering
    \caption{}
    \begin{tabular}{@{}lcccccc@{}}
\toprule
      & $n$ & $\gamma$ & lr        & $\delta$ & $\omega$ & $\lambda$ \\ \midrule
Yeast & 200 & 0.02     & $1\times 10^{-2}$ & 2.5        & 4        & 0.2      \\ \midrule
Human & 200 & 0.005     & $5\times 10^{-2}$ & 3.5        & 5        & 0.3      \\ \bottomrule
\end{tabular}
\end{subtable}%
\end{table*}

For all experiments, embeddings are learned with the Adam optimizer~\citep{kingma2015adam} for a maximum of 10,000 epochs. All hyperparameters are selected based on validation set performance. The values considered are: dimensionality $n \in \{25, 50, 100, 200\}$, margin $\gamma \in \{0, 0.005, 0.02, 0.05, 0.1, 0.15\}$, learning rate lr $\in \{5\times 10^{-2}, 1\times 10^{-2}, 5\times 10^{-3}, 1\times 10^{-3}, 5\times 10^{-4}\}$, negative sampling distance $\delta$ in steps of size $0.5$ from 0 to $5.5$, number of negative samples $\omega\in\{1, 2, 3, 4, 5\}$, and regularization $\lambda\in\{0, 0.05, 0.1, 0.2, 0.3, 0.4, 0.5\}$.

We report the hyperparameters used in the experiments in \cref{tab:hyper}. Note that for ELBE on GALEN in subsumption prediction, and for \boxsqel\ on GALEN in deductive reasoning and Yeast in PPI prediction, we furthermore decay the learning rate with a factor of 0.1 after 2000 training epochs. For PPI prediction, we only report the hyperparameters for \boxsqel, since the baseline results are taken from the literature.

\begin{figure*}[t]
    \centering
    \vspace{1cm}
    \begin{minipage}{\textwidth}
    \begin{gather}
        \inferrule
        {\mathsf{SodiumLactate}\sqsubseteq\mathsf{NAMEDComplexChemical}\\
        \label{eq:firstInference}
        \mathsf{NAMEDComplexChemical}\sqsubseteq\mathsf{ChemicalSubstance}}
        {\mathsf{SodiumLactate}\sqsubseteq\mathsf{ChemicalSubstance}} \\[.4cm]
        \label{eq:secondInference}
        \mathsf{SodiumLactate}\sqsubseteq \exists \mathsf{isMadeOf}.\mathsf{Sodium}\\[.4cm]
        \inferrule
        {\eqref{eq:firstInference}\\ \eqref{eq:secondInference}\\
        \mathsf{ChemicalSubstance}\sqcap\exists\mathsf{isMadeOf}.\mathsf{Sodium}\sqsubseteq\mathsf{SodiumCompound}}
        {\mathsf{SodiumLactate}\sqsubseteq\mathsf{SodiumCompound}}
    \end{gather}
    \end{minipage}
    \caption{Inference steps required to derive $\mathsf{SodiumLactate}\sqsubseteq \mathsf{SodiumCompound}$. The derivations are from top to bottom, similar to natural deduction.}
    \label{fig:inferences_chemical}
\end{figure*}

\section{Comparison of deductive and inductive reasoning}
\label{sec:deductive_vs_inductive}
We investigate the comparatively lower performance of the embedding models on deductive reasoning compared to the inductive subsumption prediction task with a concrete example. In the GALEN ontology, we find the following subsumption axiom to predict in the testing set:
\begin{equation*}
    \label[axiom]{ax:prediction}
    \mathsf{SodiumLactate}\sqsubseteq\mathsf{NAMEDComplexChemical}.
\end{equation*}
This axiom cannot be logically inferred from the training data. However, the following similar subsumptions are part of the training data:
\begin{align*}
    \mathsf{SodiumLactate}&\sqsubseteq\exists \textsf{isMadeOf}. \mathsf{Sodium}\\
    \mathsf{SodiumBicarbonate}&\sqsubseteq\exists\textsf{isMadeOf}. \mathsf{Sodium}\\
    \mathsf{SodiumCitrate}&\sqsubseteq\exists \textsf{isMadeOf}. \mathsf{Sodium}\\
    \mathsf{SodiumBicarbonate}&\sqsubseteq\mathsf{NAMEDComplexChemical}\\
    \mathsf{SodiumCitrate}&\sqsubseteq\mathsf{NAMEDComplexChemical}.
\end{align*}
It seems quite likely that our model will be able to exploit the statistical information contained in these subsumptions to learn embeddings where \tsf{SodiumLactate} is close to \tsf{NAMEDComplexChemical}, and thus achieve a high score for the desired true axiom.

In contrast, in the deductive reasoning setting, all testing axioms are logical inferences of the training data. One such testing axiom is
\begin{equation*}
    \mathsf{SodiumLactate}\sqsubseteq \mathsf{SodiumCompound}.
\end{equation*}
This logical inference has been reached by a reasoning algorithm such as ELK~\citep{kazakov2014Incredible} by performing the sequence of derivations listed in \cref{fig:inferences_chemical}.
Embedding methods, on the other hand, do not explicitly perform these derivations. Instead, they learn embeddings that align with the semantics of the ontology, i.e., in convergence we will have that $\boxx(\asf{SodiumLactate}) \subseteq \boxx(\asf{SodiumCompound})$. However, our model has to learn highly accurate embeddings for a number of different concepts and roles involved in the derivation sequence for this to actually hold in practice. Furthermore, the model then makes statistical predictions based on the scoring functions instead of only predicting axioms that completely align with the semantics. Hence there may be many other plausible axioms that are ranked higher by the model, decreasing the score of the desired logical inference.
\end{document}